\newtheorem{theorem}{Theorem}
\newtheorem{corollary}{Corollary}
\newtheorem{proposition}{Proposition}
\definecolor{color1}{RGB}{129,15,124}
\definecolor{color2}{RGB}{8,81,156}
\definecolor{color3}{RGB}{37,37,37}
\begin{document}

%\title{Simplifying Graph Transformers (SGFormer): \\ Multi-Layer v.s. One-Layer Propagation}
\title{SGFormer: Single-Layer Graph Transformers with Approximation-Free Linear Complexity}
%\title{Single-layer Graph Transformers (SGFormer): Achieving Approximation-free Linear Complexity with Single-layer Propagation}
\author{Qitian Wu, Kai Yang, Hengrui Zhang, David Wipf, Junchi Yan
      % <-this % stops a space
\IEEEcompsocitemizethanks{\IEEEcompsocthanksitem Qitian Wu is with Broad Institute of MIT and Harvard. E-mail: wuqitian@broadinstitute.org. 
\IEEEcompsocthanksitem Kai Yang is with Department of Computer Science and Engineering, Shanghai Jiao Tong University. E-mail: icarus1411@sjtu.edu.cn. 
\IEEEcompsocthanksitem Hengrui Zhang is with University of Illinois, Chicago. E-mail: hzhan55@uic.edu. 
% Chenxiao Yang is with Toyota Technological Institute at Chicago (e-mail: chr26195@sjtu.edu.cn). 
\IEEEcompsocthanksitem David Wipf is with Amazon Web Service. E-mail: davidwipf@gmail.com.
\IEEEcompsocthanksitem Junchi Yan is with School of Artificial Intelligence, Shanghai Jiao Tong University. E-mail: yanjunchi@sjtu.edu.cn.
}% <-this % stops a space
\thanks{This paper is an expanded version from the conference paper \emph{SGFormer: Simplifying and Empowering Transformers for Large-Graph Representations} which is published in Advances in Neural Information Processing Systems (NeurIPS) held on Dec 10-16, 2023 in New Orleans, U.S.}}

%   \author{Qitian Wu,~\IEEEmembership{IEEE Member,}
%       Kai Yang,~\IEEEmembership{IEEE Student Member,}, Hengrui Zhang,~\IEEEmembership{IEEE Student Member,} \\ 
%       David Wipf,~\IEEEmembership{IEEE Fellow}% <-this % stops a space

%   \thanks{Manuscript submitted on July xxx, 2024. \update{This paper is an expanded version from the conference paper \emph{SGFormer: Simplifying and Empowering Transformers for Large-Graph Representations} which is published in Advances in Neural Information Processing Systems (NeurIPS) held on Dec 10-16, 2023 in New Orleans, U.S.}}
%   \thanks{Qitian Wu is with the Broad Institute of MIT and Harvard (e-mail: qitianwu228@gmail.com). Kai Yang is with Shanghai Jiao Tong University (e-mail: icarus1411@sjtu.edu.cn). Hengrui Zhang is with Univeristy of Illinois, Chicago (e-mail: hzhan55@uic.edu). Chenxiao Yang is with Toyota Technological Institute at Chicago (e-mail: chr26195@sjtu.edu.cn). David Wipf is with Amazon Web Service (e-mail: davidwipf@gmail.com).}%  
% }

% The paper headers
\markboth{}{Wu \MakeLowercase{\textit{et al.}}: Towards Simplifying Transformers on Graphs}

% ====================================================================
\IEEEtitleabstractindextext{%
\begin{abstract}
Learning representations on large graphs is a long-standing challenge due to the inter-dependence nature. Transformers recently have shown promising performance on small graphs thanks to its global attention for capturing all-pair interactions beyond observed structures. Existing approaches tend to inherit the spirit of Transformers in language and vision tasks, and embrace complicated architectures by stacking deep attention-based propagation layers. In this paper, we attempt to evaluate the necessity of adopting multi-layer attentions in Transformers on graphs, which considerably restricts the efficiency. Specifically, we analyze a generic hybrid propagation layer, comprised of all-pair attention and graph-based propagation, and show that multi-layer propagation can be reduced to one-layer propagation, with the same capability for representation learning. It suggests a new technical path for building powerful and efficient Transformers on graphs, particularly through simplifying model architectures without sacrificing expressiveness. As exemplified by this work, we propose a Simplified Single-layer Graph Transformers (SGFormer), whose main component is a single-layer global attention that scales linearly w.r.t. graph sizes and requires none of any approximation for accommodating all-pair interactions. Empirically, SGFormer successfully scales to the web-scale graph \textsc{ogbn-papers100M}, yielding orders-of-magnitude inference acceleration over peer Transformers on medium-sized graphs, and demonstrates competitiveness with limited labeled data.
\end{abstract}

% Note that keywords are not normally used for peerreview papers.
\begin{IEEEkeywords}
Graph Representation Learning, Graph Neural Networks, Transformers, Linear Attention, Scalability, Efficiency
\end{IEEEkeywords}}

\maketitle

% For peer review papers, you can put extra information on the cover
% page as needed:
% \ifCLASSOPTIONpeerreview
% \begin{center} \bfseries EDICS Category: 3-BBND \end{center}
% \fi
%
% For peerreview papers, this IEEEtran command inserts a page break and
% creates the second title. It will be ignored for other modes.
\IEEEdisplaynontitleabstractindextext

\IEEEpeerreviewmaketitle

% ====================================================================
% ====================================================================
% ====================================================================

% === I. INTRODUCTION =============================================================
% =================================================================================
\ifCLASSOPTIONcompsoc
\IEEEraisesectionheading{\section{Introduction}\label{sec:introduction}}
\else
\section{Introduction}
\label{sec:introduction}
\fi

\IEEEPARstart{L}{earning} on large graphs that connect interdependent data points is a fundamental challenge in machine learning and pattern recognition, with a broad spectrum of applications ranging from social sciences to natural sciences~\cite{largegraph-app-1,largegraph-app-2,largegraph-app-3,largegraph-app-4,largegraph-app-5}. One key problem is how to obtain effective node representations, i.e., the low-dimensional vectors (a.k.a. embeddings) that encode the semantic and topological features, especially under limited computation budget (e.g., time and space), that can be efficiently utilized for downstream tasks.

Recently, Transformers have emerged as a popular class of foundation encoders for graph-structured data by treating nodes in the graph as input tokens and have shown highly competitive performance on graph-level tasks~\cite{graphtransformer-2020,graphbert-2020,graphtrans-neurips21,graphformer-neurips21,graphgps} and node-level tasks~\cite{gophormer,nodeformer,chen2023nagphormer,wu2023difformer} on graph data. The global attention in Transformers~\cite{transformer} can capture implicit inter-dependencies among nodes that are not embodied by input graph structures, but could potentially make a difference in data generation (e.g., the undetermined structures of proteins that lack known tertiary structures~\cite{nagarajan2013novel,trans-app-1}). This advantage provides Transformers with the desired expressivity for capturing e.g., long-range dependencies and unobserved interactions, and leads to superior performance over graph neural networks (GNNs) in small-graph-based applications~\cite{trans-app-1,trans-app-2,trans-app-3,trans-app-4,trans-app-5}.

However, a concerning trend in current architectures is their tendency to automatically adopt the design philosophy of Transformers used in vision and language tasks~\cite{bert,gpt3,vit}. This involves stacking deep multi-head attention layers, which results in large model sizes and the data-hungry nature of the model. However, this design approach poses a significant challenge for Transformers in scaling to large graphs where the number of nodes can reach up to millions or even billions, particularly due to two-fold obstacles. 

1) The global all-pair attention mechanism is the key component of modern Transformers. Because of the global attention, the time and space complexity of Transformers often scales quadratically with respect to the number of nodes, and the computation graph grows exponentially as the number of layers increases. Thereby, training deep Transformers for large graphs with millions of nodes can be extremely resource-intensive and may require delicate techniques for partitioning the inter-connected nodes into smaller mini-batches in order to mitigate computational overhead~\cite{nodeformer,wu2023difformer,survey-graphtransformer,chen2023nagphormer}.

2) In small-graph-based tasks such as graph-level prediction for molecular property~\cite{hu2021ogb}, where each instance is a graph, and there are typically abundant labeled graph instances, large Transformers may have sufficient supervision for generalization. However, in large-graph-based tasks such as node-level prediction for protein functions~\cite{ogb-nips20}, where there is usually a single graph and each node is an instance, labeled nodes can be relatively limited. This increases the difficulty of Transformers with complicated architectures in learning effective representations in such cases.
    % \item \textbf{Graph Inductive Bias:} Another important question lies in how to incorporate the graph structural information into Transformer architectures. While existing efforts have explored different ways, such as positional embeddings~\cite{graphformer-neurips21,graphtrans-neurips21}, augmented regularization loss~\cite{nodeformer} and combination of GNNs~\cite{graphgps}, existing designs are mostly motivated from subtle heuristics and lack principled guidance. It still remains unclear the underlying mechanism of Transformers and pushing further, if there exist any theoretical guidance for inserting graph inductive bias into Transformers.

This paper presents an attempt to investigate the necessity of using deep propagation layers in Transformers for graph representations and explore a new technical path for simplifying Transformer architectures that can scale to large graphs. Particularly, we start from the interpretation of message-passing-based propagation layers (i.e., a Transformer or GNN layer) as optimization dynamics of a classic graph signal denoising problem. This viewpoint lends us a principled way to reveal the underlying mechanism of graph neural networks and Transformers, based on which we naturally derive a hybrid propagation layer that combines global attention and graph-based propagation in once updates. This hybrid layer can be seen as a generalization of common GNNs' and vanilla Transformer's propagation layers by interpolation between two model classes, and such a design is also adopted by state-of-the-art Transformer models on graphs, e.g.,~\cite{nodeformer,graphgps,wu2023difformer}. On top of this, we answer the following research questions with theoretical analysis, model designs and empirical results as our main contributions.
% \footnote{\textbf{Comparison with the conference version.} Built upon our work~\cite{wu2024simplifying} published in NeurIPS'23, we have made substantial extensions and significantly rewritten the paper. Overall, the conference version \cite{wu2024simplifying} introduces the model from heuristics, while this work starts from a theoretical viewpoint as motivation. 1) We consider a generic hybrid propagation layer with trainable weights and analyze the equivalence between multi-layer and single-layer models (Sec.~\ref{sec-theory}). 2) On top of the theoretical results, we present principled guidance that derives the model architecture of SGFormer with grounded justifications (Sec.~\ref{sec-theory-summary} and \ref{sec-model-design}). 3) Furthermore, we supplement extensive results including comparison with three state-of-the-art competitors ANS-GT~\cite{ANS-GT}, GraphGPS~\cite{graphgps} and DIFFormer~\cite{wu2023difformer} (Sec.~\ref{sec-exp-comp-m}, \ref{sec-exp-comp-l}, \ref{sec-exp-effi}), experiments with limited labeled data (Sec.~\ref{sec-exp-ratio}), ablation studies (Sec.~\ref{sec-exp-ablation}) and thorough comparison between multi-layer and single-layer attentional models (Sec.~\ref{sec-exp-dis}).}. 

% we further prove that the multi-layer hybrid model has the same denoising effect as a one-layer counterpart, which implies a potential means to simplify graph Transformers without sacrificing the expressiveness for learning effective representations. 

\begin{figure*}[t!]
  \centering
  \includegraphics[width=0.95\linewidth]{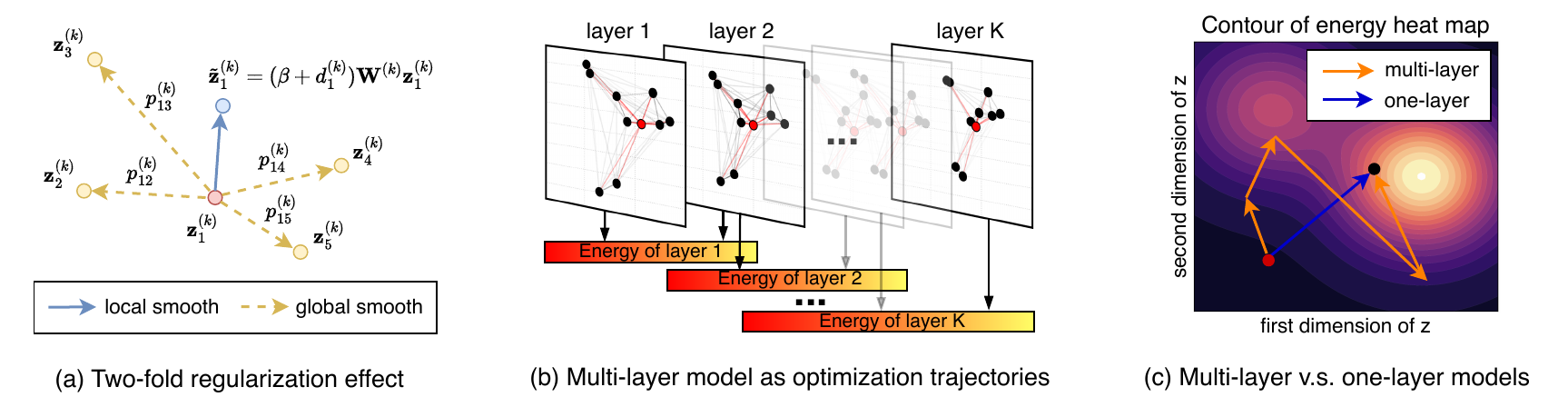}
  % \vspace{-5pt}
  \caption{Illustration of the main theoretical results in Sec.~\ref{sec-theory}. (a) The layer-wise updating rule of message passing models (e.g., GNNs and Transformers) is equivalent to a gradient descent step minimizing a regularized energy in graph signal denoising. The energy has two-fold regularization effects, which enforce local and global smoothness, respectively. (b) Common Transformers stacking multiple propagation layers can be seen as a cascade of descent steps on layer-dependent energy (since the attention scores and feature transformations are specific to each layer). (c) The multi-layer model can be reduced to a one-layer model where the latter contributes to the same denoising effect, i.e., yielding the equivalent output embeddings.}
  \label{fig:theory}
  % \vspace{-5pt}
\end{figure*}

$\blacktriangleright$\textbf{Q1: Is Multi-Layer Propagation Necessary?} We show that one hybrid propagation layer corresponds to a one-step gradient descent on a regularized energy that enforces certain smoothness effects for node representations. In particular, the smootheness effects are two-fold that facilitate local and global regularization (see Fig.~\ref{fig:theory}(a)). Since the energy function depends on the layer-specific attention scores and feature transformation weights, a model involving multi-layer propagation can be seen as a cascade of descent steps minimizing different objectives. In this regard, using multi-layer propagation may lead to potential redundancy from the perspective of graph signal denoising, since the layer-wise updates contribute to disparate smoothing effects and could interfere with each other (see Fig.~\ref{fig:theory}(b)). Mapping back to model architectural designs, such a redundancy challenges the necessity of stacking deep propagation layers in Transformers, which considerably restricts the computational efficiency for scaling to large graphs.

$\blacktriangleright$\textbf{Q2: How Powerful Is Single-Layer Propagation?} Based on the above result, we prove that for any model stacking multiple propagation layers, there exists an energy function such that one-step gradient descent from the initial point yields the equivalent denoising effects as the multi-layer model. Pushing further, there exists a single-layer propagation model whose updated embeddings have negligibly small approximation error compared to the updated embeddings yielded by the multi-layer model (see Fig.~\ref{fig:theory}(c)). Particularly, such a single-layer model also adopts a propagation layer of the hybrid form and corresponds to a single gradient descent step on a fixed energy. The latter, in principle, reduces the redundancy within the optimization dynamics for graph signal denoising. This result implies that multi-layer propagation can be reduced to one-layer propagation that can achieve equivalent (up to negligible approximation error) expressiveness for representations. It further enlightens a potential way to simplify Transformer architectures on graphs without sacrificing effectiveness.

$\blacktriangleright$\textbf{Q3: How to Unleash the Power of Single-Layer Attention?} In light of the theoretical results, we propose an encoder backbone for learning representations on large graphs, referred to as Simplified Graph Transformer (SGFormer). Specifically, SGFormer adopts a hybrid architecture that linearly combines a single-layer global attention and a GNN network. In particular, we propose a simple global attention function that linearly scales w.r.t. the number of nodes and accommodates all-pair interactions without using any approximation. In terms of the GNN network, we simply instantiate it as a shallow GCN whose computation is desirably efficient on large graphs. Equipped with such designs, SGFormer shows expressiveness to capture the implicit dependencies and, meanwhile, incorporates graph inductive bias. Moreover, compared to peer graph Transformers, SGFormer requires no positional encodings, feature pre-processing, extra loss functions, or edge embeddings. 

$\blacktriangleright$\textbf{Q4: How Does A Simple Transformer Model Perform?} Despite using simple architecture, experiments show that SGFormer achieves highly competitive performance in an extensive range of node property prediction datasets, which are used as common benchmarks for model evaluation w.r.t. representation learning on large graphs. In terms of the efficiency, on medium-sized graphs, where the node numbers range from 1K to 10K, SGFormer achieves up to 20x and 30x speedup in terms of training and inference time costs, respectively, over recently proposed scalable Transformers. In terms of scalability, the time and memory costs of SGFormer both scale linearly w.r.t. graph sizes, with lower growth rate than existing linearly-complex Transformers. Notably, SGFormer can scale smoothly to the web-scale graph \textsc{ogbn-papers100M} with 0.1B nodes, where existing models to our knowledge fail to demonstrate. In addition, SGFormer shows superior performance with limited labeled data for training. We also conduct thorough ablation studies that validate the effectiveness of the proposed designs, particularly, the advantage of single-layer attentions over multi-layer ones. The implementation is publicly available at \url{https://github.com/qitianwu/SGFormer}.

\section{Preliminary and Background}\label{sec-related}

In this section, we introduce notations as building blocks of the analysis and the proposed model. In the meanwhile, we briefly review the literature related to the present work. 

\textbf{Notations.} We denote a graph as $\mathcal G = (\mathcal V, \mathcal E)$ where the node set $\mathcal V$ comprises $N$ nodes and the edge set $\mathcal E = \{(u, v)~|~a_{uv}=1\}$ is defined by a symmetric (and usually sparse) adjacency matrix $\mathbf A = [a_{uv}]_{N\times N}$, where $a_{uv}=1$ if node $u$ and $v$ are connected, and $0$ otherwise. Denote by $\mathbf D = \mbox{diag}(\{d_u\}_{u=1}^N)$ the diagonal degree matrix of $\mathbf A$, where $d_u = \sum_{v=1}^N a_{uv}$. Each node has a $D$-dimensional input feature vector $\mathbf x_u \in \mathbb R^D$ and a label $y_u$ which can be a scalar or a vector. The nodes in the graph are only partially labeled, forming a node set denoted as $\mathcal V_{tr} \subset \mathcal V$ (wherein $|\mathcal V_{tr}| \ll N$). Learning representations on graphs aims to produce node embeddings $\mathbf z_u\in \mathbb R^d$ that are useful for downstream tasks. The size of the graph, as measured by the number of nodes $N$, can be arbitrarily large, usually ranging from thousands to billions. 

\subsection{Graph Neural Networks} 

Graph Neural Networks (GNNs)~\cite{scarselli2008gnnearly,GCN-vallina} compute node embeddings through message passing rules over observed structures.
The layer-wise message passing of GNNs can be defined as recursively propagating the embeddings of neighboring nodes to update the node representation:
\begin{equation}\label{eqn-gnnupdating}
    \mathbf z_u^{(k+1)} = \eta^{(k)}(\overline{\mathbf z}_u^{(k+1)}), ~\overline{\mathbf z}_u^{(k+1)} = \mbox{Agg}(\{\mathbf z_v^{(k)}| v\in \mathcal R(u) \}),
\end{equation}
where $\mathbf z_u^{(k)} \in \mathbb R^d$ denotes the embedding at the $k$-th layer, $\eta$ denotes (parametric) feature transformation, and $\mbox{Agg}$ is an aggregation function over the embeddings of nodes in $\mathcal R(u)$. The latter is the receptive field of node $u$ determined by $\mathcal G$. Common GNNs, such as GCN~\cite{GCN-vallina} and GAT~\cite{GAT}, along with their numerous successors, e.g.,~\cite{jknet-icml18,SGC-icml19,h2gcn-neurips20,glognn,zhu2021graph}, typically assume $\mathcal R(u)$ to be the set of neighboring nodes in $\mathcal G$. By stacking multiple layers of local message passing as defined by \eqref{eqn-gnnupdating}, the model can integrate information from the local neighborhood into the representation. 

%Due to the involvement of neighboring nodes in the computation graph, which exponentially increase w.r.t. the layer number, training GNNs on large graphs (e.g., with a million nodes) can be challenging. To reduce the overhead, GNNs require subtle techniques like neighbor sampling~\cite{graphsaint}, graph partition~\cite{clustergcn-kdd19}, or historical embeddings~\cite{gnnautoscale}. 
However, since the number of neighboring nodes involved in the computation exponentially increases as the layer number goes up, the aggregated information from distant nodes will be diluted with an exponential rate w.r.t. the model depth. This problem referred to as over-squashing~\cite{oversquashing-iclr21} can limit the expressiveness of GNNs for learning effective representations.
Moreover, recent evidence suggests that GNNs yield unsatisfactory performance in the case of graphs with heterophily~\cite{h2gcn-neurips20}, long-range dependencies~\cite{longrange-icml18} and structural incompleteness~\cite{IDLS-icml19}. This urges the community to explore new architectures that can overcome the limitations of GNNs' local message passing.

%Some works adopt knowledge distillation for inference on sparser graphs~\cite{yang2022geometric,zhang2022graph} to accelerate inference, and more recently, MLP architectures are used to replace GNNs to accelerate training~\cite{yang2022graph,han2022mlpinit}.

\subsection{Graph Transformers} 

Beyond message passing within local neighborhoods, Transformers have recently gained attention as powerful graph encoders~\cite{graphtransformer-2020,graphbert-2020,graphformer-neurips21,SAN-neurips21,graphtrans-neurips21,SAT-icml22,EGT,nodeformer,graphgps}. These models use global all-pair attention, which aggregates all node embeddings to update the representation of each node: 
\begin{equation}\label{eqn-transupdating}
    \mathbf z_u^{(k+1)} = \eta^{(k)}(\overline{\mathbf z}_u^{(k+1)}), ~\overline{\mathbf z}_u^{(k+1)} = \mbox{Agg}(\{\mathbf z_v^{(k)}| v\in \mathcal V \}).
\end{equation}
The global attention can be seen as a generalization of GNNs' message passing to a densely connected graph where $\mathcal R(u) = \mathcal V$, and equips the model with the ability to capture unobserved interactions and long-range dependence.

However, the all-pair attention incurs $O(N^2)$ complexity and becomes a computation bottleneck that limits most Transformers to handling only small-sized graphs (with up to hundreds of nodes). For larger graphs, recent efforts have resorted to strategies such as sampling a small (relative to $N$) subset of nodes for attention computation~\cite{graphsage} or using ego-graph features as input tokens~\cite{gophormer,chen2023nagphormer}. These strategies sacrifice the expressivity needed to capture all-pair interactions among arbitrary nodes. Another line of recent works designs new attention mechanisms that can efficiently achieve all-pair message passing within linear complexity~\cite{nodeformer,graphgps}. Nevertheless, these schemes require approximation that can lead to training instability. 

Another observation is that nearly all of the Transformers mentioned above tend to stack deep attention layers, in line with the design of large models used in vision and language tasks~\cite{bert,gpt3,vit}. However, this architecture presents challenges for scaling to industry-scale graphs, where $N$ can reach billions. Moreover, due to the complicated architecture, the model can become vulnerable to overfitting when the number of labeled nodes $|\mathcal V_{tr}|$ is much smaller than $N$. This is a common issue in extremely large graphs where node labels are scarce~\cite{ogb-nips20}. The question remains how to build an efficient and scalable Transformer model that maintains the desired expressiveness for learning effective graph representations.

\subsection{Node-Level v.s. Graph-Level Tasks} 

Before going to our methodology, we would like to pinpoint the differences between two graph-based predictive tasks of wide interest. \textit{Node-level tasks} (our focus) target a single graph connecting all the instances as nodes where each instance has a label to predict. Differently, in \textit{graph-level tasks}, each instance (e.g., molecule) itself is a graph with a label, and graph sizes are often small, in contrast with the arbitrarily large graph in node classification depending on the number of instances in a dataset. The different input scales result in that the two problems often need disparate technical considerations~\cite{ogb-nips20}. While GNNs exhibit comparable competitiveness in both tasks, most of current Transformers are tailored for graph classification (on small graphs)~\cite{graphtransformer-2020,graphbert-2020,graphformer-neurips21,SAN-neurips21,graphtrans-neurips21,SAT-icml22,EGT,graphgps,puretransformer}, and it still remains largely under-explored to design powerful and efficient Transformers for node-level tasks on large graphs~\cite{gophormer,ANS-GT,nodeformer}.

\section{Theoretical Analysis and Motivation}\label{sec-theory}

Before we introduce the proposed model, we commence with motivation from the theoretical perspective which sheds some insights on the model design. Based on the discussions in Sec.~\ref{sec-related}, our analysis in this section aims at answering how to simultaneously achieve the two concerning criteria regarding effectiveness and efficiency for building powerful and scalable Transformers on large graphs. We will unfold the analysis in a progressive manner that lends us a principled way to derive the model architecture.

In specific, our starting point is rooted on the interpretation of message-passing-based propagation layers adopted by common GNNs and Transformers as optimization dynamics of a classic graph signal denoising problem~\cite{shuman2013emerging,kalofolias2016learn,pmlr-v162-fu22e}. The latter can be formulated as solving a minimization problem associated with an energy objective and allows a means to dissect the underlying mechanism of different models for representation learning. On top of this, we present a hybrid architecture that integrates the desired expressiveness of all-pair global attention and graph inductive bias into a unified model. Then, as a further step, we show that the hybrid multi-layer model can be simplified to a one-layer counterpart that can, in principle, significantly enhance the computational efficiency without sacrificing the representation power. 

\subsection{A Hybrid Model Backbone}\label{sec-theory-hybrid}

Denote by $\mathbf Z^{(k)} = [\mathbf z_u^{(k)}]_{u=1}^N \in \mathbb R^{N\times d}$ the stack of $N$ nodes' embeddings at the $k$-th layer. We consider generic message-passing networks, which can unify the layer-wise updating rules of common GNNs and Transformers as a propagation layer with self-loop connection (a.k.a. residual link):
\begin{equation}\label{eqn-update}
    \mathbf Z^{(k+1)} = \mathbf P^{(k)} \mathbf Z^{(k)} \mathbf W^{(k)} + \beta \mathbf Z^{(k)} \mathbf W^{(k)},
\end{equation}
where $\beta\geq 0$ is a weight on the self-loop path, $\mathbf P^{(k)}=[p_{uv}^{(k)}]_{N\times N}$ denotes the propagation matrix at the $k$-th layer and $\mathbf W^{(k)} \in \mathbb R^{d\times d}$ denotes the layer-specific trainable weight matrix for feature transformation.
%and its Laplacian matrix $\mathbf \Delta^{(k)} = \tilde{\mathbf D}^{(k)} - \mathbf P^{(k)}$. 
For GNNs, the propagation matrix is commonly instantiated as a fixed sparse matrix, e.g., the normalized graph adjacency. For Transformers, $\mathbf P^{(k)}$ becomes a layer-specific dense attention matrix computed by $\mathbf Z^{(k)}$. 

\textbf{Propagation Layers as Optimization Dynamics.} The following theorem shows that, under mild conditions, the updating rule defined by Eqn.~\ref{eqn-update} is essentially an optimization step on a regularized energy that promotes a certain smoothness effect for graph signal denoising.
\begin{theorem}\label{thm-trans-opt}
    For any propagation matrix $\mathbf P^{(k)} = [p_{uv}^{(k)}]_{N\times N}$ and
    %$\beta \geq \max\{0, -\lambda\}$, where $\lambda$ is the smallest singular value of $\mathbf P^{(k)}$, 
    symmetric weight matrix $\mathbf W^{(k)}$, Eqn.~\ref{eqn-update} is a gradient descent step with step size $\frac{1}{2}$ for the optimization problem w.r.t. the quadratic energy: $E(\mathbf Z; \mathbf Z^{(k)}, \mathbf P^{(k)}, \mathbf W^{(k)})\triangleq$
    % \begin{equation}\label{eqn-energytrans}
    %      \min_{\mathbf Z}~~ \mbox{tr}\left( \mathbf Z^\top \mathbf P^{(k)} \mathbf Z\right ) + \left \| \mathbf Z - (\beta \mathbf I+ \tilde{\mathbf D}^{(k)}) \mathbf Z^{(k)} \right \|_{\mathcal F}^2,
    % \end{equation}
    \begin{equation}\label{eqn-energytrans}
         \sum_{u, v} p_{uv}^{(k)} \left \|\mathbf z_u - \mathbf z_v\right \|^2_{\mathbf W^{(k)}} + \sum_{u}  \| \mathbf z_u -  ( \beta + d_u^{(k)}) \mathbf W^{(k)} \mathbf z_u^{(k)} \|_2^2,
    \end{equation}
    where $ d_{u}^{(k)} = \sum_{v=1}^N p_{uv}^{(k)}$ and the weighted vector norm is defined by $\|\mathbf x\|_{\mathbf W}^2 = \mathbf x^\top \mathbf W \mathbf x$.
\end{theorem}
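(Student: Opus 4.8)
The plan is to check that the right-hand side of Eqn.~\ref{eqn-update} is literally the descent iterate $\mathbf{Z}^{(k+1)} = \mathbf{Z}^{(k)} - \tfrac{1}{2}\nabla_{\mathbf{Z}} E\big|_{\mathbf{Z}=\mathbf{Z}^{(k)}}$, so the whole argument reduces to computing $\nabla_{\mathbf{Z}} E$ for the quadratic energy in Eqn.~\ref{eqn-energytrans} and evaluating it at the anchor point $\mathbf{Z}=\mathbf{Z}^{(k)}$. Since $E$ splits into a pairwise smoothness term $E_1 = \sum_{u,v} p_{uv}^{(k)}\|\mathbf{z}_u - \mathbf{z}_v\|_{\mathbf{W}^{(k)}}^2$ and a reconstruction term $E_2 = \sum_u\|\mathbf{z}_u - (\beta + d_u^{(k)})\mathbf{W}^{(k)}\mathbf{z}_u^{(k)}\|_2^2$, I would differentiate the two parts separately and recombine. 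Note that $E$ is quadratic, so both gradients are affine in $\mathbf{Z}$ and there is no subtlety about which critical point is found: the claim is only that one explicit step reproduces the layer map, so a direct computation suffices and convexity/optimality play no role.

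For $E_1$ I would use the identity that a graph Dirichlet energy equals the quadratic form of the (unnormalized) Laplacian $\mathbf{L}^{(k)} = \mathbf{D}^{(k)} - \mathbf{P}^{(k)}$, where $\mathbf{D}^{(k)} = \mathrm{diag}(d_1^{(k)},\dots,d_N^{(k)})$ collects the row sums $d_u^{(k)} = \sum_v p_{uv}^{(k)}$ of $\mathbf{P}^{(k)}$. Differentiating $\|\mathbf{z}_u - \mathbf{z}_v\|_{\mathbf{W}^{(k)}}^2 = (\mathbf{z}_u - \mathbf{z}_v)^\top \mathbf{W}^{(k)}(\mathbf{z}_u - \mathbf{z}_v)$ and using the symmetry of $\mathbf{W}^{(k)}$, the per-node gradient is (up to a constant factor I must track carefully) proportional to $\mathbf{W}^{(k)}\sum_v p_{uv}^{(k)}(\mathbf{z}_u - \mathbf{z}_v) = \mathbf{W}^{(k)}\big(d_u^{(k)}\mathbf{z}_u - (\mathbf{P}^{(k)}\mathbf{Z})_u\big)$, i.e. in matrix form, after transposing $\mathbf{W}^{(k)} = \mathbf{W}^{(k)\top}$ to the right, $\nabla_{\mathbf{Z}} E_1 = 2(\mathbf{D}^{(k)} - \mathbf{P}^{(k)})\mathbf{Z}\,\mathbf{W}^{(k)}$. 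For the reconstruction term, $\nabla_{\mathbf{Z}} E_2 = 2\big(\mathbf{Z} - (\beta\mathbf{I} + \mathbf{D}^{(k)})\mathbf{Z}^{(k)}\mathbf{W}^{(k)}\big)$, where symmetry of $\mathbf{W}^{(k)}$ is again used to rewrite the anchor $((\beta+d_u^{(k)})\mathbf{W}^{(k)}\mathbf{z}_u^{(k)})$ as a right multiplication by $\mathbf{W}^{(k)}$.

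The final step is to evaluate the total gradient at $\mathbf{Z} = \mathbf{Z}^{(k)}$ and substitute into the descent step. Here the construction of the energy pays off: the Laplacian gradient contributes $+2\mathbf{D}^{(k)}\mathbf{Z}^{(k)}\mathbf{W}^{(k)}$ through its diagonal part, the anchor in $E_2$ contributes $-2\mathbf{D}^{(k)}\mathbf{Z}^{(k)}\mathbf{W}^{(k)}$, so the degree-weighted terms cancel exactly; the $2\mathbf{Z}^{(k)}$ from $\nabla E_2$, scaled by $-\tfrac{1}{2}$, cancels the leading $\mathbf{Z}^{(k)}$ of the descent step; and with step size $\tfrac{1}{2}$ the survivors collapse precisely to $\mathbf{P}^{(k)}\mathbf{Z}^{(k)}\mathbf{W}^{(k)} + \beta\mathbf{Z}^{(k)}\mathbf{W}^{(k)}$, i.e. Eqn.~\ref{eqn-update}.

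I expect the main obstacle to be the bookkeeping of the smoothness-gradient constant: one must get the factor coming from the symmetric pairwise sum right (equivalently, fix the $\tfrac{1}{2}$ convention in the Dirichlet energy / one-sided differentiation) so that the row-degree $d_u^{(k)}$ appearing in $\nabla E_1$ matches the degree-weighting $(\beta + d_u^{(k)})$ hard-coded into the reconstruction anchor. This matching is exactly what is calibrated against the step size $\tfrac{1}{2}$ and is what makes the $\mathbf{D}^{(k)}$ terms cancel, leaving the bare propagation $\mathbf{P}^{(k)}\mathbf{Z}^{(k)}\mathbf{W}^{(k)}$. The symmetry assumption on $\mathbf{W}^{(k)}$ is essential throughout: it gives the weighted-norm gradient its clean form $2\mathbf{W}^{(k)}(\cdot)$ and lets me move $\mathbf{W}^{(k)}$ to the right to reproduce the $\mathbf{Z}\mathbf{W}^{(k)}$ pattern of Eqn.~\ref{eqn-update}; without it the identification would not go through.
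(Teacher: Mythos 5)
Your proposal is correct and follows essentially the same route as the paper: differentiate the two quadratic terms separately (the smoothness term via the Laplacian $\mathbf D^{(k)}-\mathbf P^{(k)}$, which the paper writes directly as $\mathrm{tr}(\mathbf Z^\top\boldsymbol\Delta^{(k)}\mathbf Z\mathbf W^{(k)})$), use the symmetry of $\mathbf W^{(k)}$ to collapse $\mathbf W^{(k)}+(\mathbf W^{(k)})^\top$ into $2\mathbf W^{(k)}$, evaluate at $\mathbf Z=\mathbf Z^{(k)}$, and observe that the $\mathbf D^{(k)}$ contributions from the two terms cancel so that the step-size-$\tfrac12$ update reproduces Eqn.~\ref{eqn-update}. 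The constant-bookkeeping issue you flag is real but is exactly the convention the paper fixes by identifying the pairwise sum with the (unsymmetrized, factor-free) trace form, so your plan lands on the paper's proof.
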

\begin{proof}
    We denote by $\mathbf D^{(k)} = \mbox{diag}(\{d_{u}^{(k)}\}_{u=1}^N)$ and $\boldsymbol \Delta^{(k)} = \mathbf D^{(k)} - \mathbf P^{(k)}$. The first term in $E(\mathbf Z; \mathbf Z^{(k)}, \mathbf P^{(k)}, \mathbf W^{(k)})$ can be expressed as $\mbox{tr}(\mathbf Z^\top \boldsymbol \Delta^{(k)} \mathbf Z \mathbf W^{(k)})$ and its gradient w.r.t. $\mathbf Z$ can be computed by
    \begin{equation}
        \frac{\partial \mbox{tr}(\mathbf Z^\top \boldsymbol \Delta^{(k)} \mathbf Z \mathbf W^{(k)})}{\partial \mathbf Z} = \boldsymbol \Delta^{(k)} \mathbf Z \cdot \left (\mathbf W^{(k)} + (\mathbf W^{(k)})^\top \right ).
    \end{equation}
    Given the symmetric property of $\mathbf W^{(k)}$, we have the gradient of $E(\mathbf Z; \mathbf Z^{(k)}, \mathbf P^{(k)}, \mathbf W^{(k)})$ evaluated at the point $\mathbf Z = \mathbf Z^{(k)}$:
    \begin{equation}
    \begin{split}
    & \left. \frac{\partial E(\mathbf Z; \mathbf Z^{(k)}, \mathbf P^{(k)}, \mathbf W^{(k)})}{\partial \mathbf Z} \right |_{\mathbf Z = \mathbf Z^{(k)}} \\
    = & 2  \boldsymbol\Delta^{(k)}  \mathbf Z^{(k)} \mathbf W^{(k)} + 2 \left [ \mathbf Z^{(k)} - (\beta \mathbf I +  \mathbf D^{(k)} )\mathbf Z^{(k)}\mathbf W^{(k)}  \right ] \\
    = & 2 \mathbf Z^{(k)} - 2\beta \mathbf Z^{(k)} \mathbf W^{(k)} - 2  \mathbf P^{(k)} \mathbf Z^{(k)} \mathbf W^{(k)},
    \end{split}
    \end{equation}
    where $\mathbf I$ denotes the $N\times N$ identity matrix. Using gradient descent with step size $\frac{1}{2}$ to minimize $E(\mathbf Z; \mathbf Z^{(k)}, \mathbf P^{(k)}, \mathbf W^{(k)})$ at the current layer yields an updating rule:
    \begin{equation}\label{eqn-thm1-proof1}
    \begin{split}
    \mathbf Z^{(k+1)} & = \mathbf Z^{(k)} - \frac{1}{2} \left. \frac{\partial E(\mathbf Z; \mathbf Z^{(k)}, \mathbf P^{(k)}, \mathbf W^{(k)})}{\partial \mathbf Z} \right |_{\mathbf Z = \mathbf Z^{(k)}}  \\
        & = \mathbf P^{(k)} \mathbf Z^{(k)} \mathbf W^{(k)} + \beta \mathbf Z^{(k)} \mathbf W^{(k)}.
    \end{split}
    \end{equation}
    We thus conclude the proof for the theorem.
    % \update{The convergence theorem of gradient descent implies that the above iteration converges if $\frac{1}{2}\leq \frac{1}{2(1-\beta) - 2\lambda}$. The latter is satisfied given the condition of the theorem.}
\end{proof}
The assumption of symmetric $\mathbf W^{(k)}$ can, to some extent, limit the applicability of this theorem, whereas, as we show later, the conclusion can be generalized to the case involving arbitrary $\mathbf W^{(k)}\in \mathbb R^{d\times d}$.
Now we discuss the implications of Theorem~\ref{thm-trans-opt}. The first term of Eqn.~\ref{eqn-energytrans} can be written as 
%$\sum_{u, v} p_{uv}^{(k)}\|\mathbf z_u^{(k)}-\mathbf z_v^{(k)}\|_2^2$, 
\begin{equation}
    \sum_{u, v} p_{uv}^{(k)} \left \|\mathbf z_u - \mathbf z_v\right \|^2_{\mathbf W^{(k)}}=\sum_{u,v} p_{uv}^{(k)} (\mathbf z_u - \mathbf z_v)^\top \mathbf W^{(k)} (\mathbf z_u - \mathbf z_v),
\end{equation}
which can be considered as generalization of the Dirichlet energy~\cite{reg-bernhard2005} defined over a discrete space of $N$ nodes where the pairwise distance between any node pair $(u, v)$ is given by $p_{uv}^{(k)}$ and the signal smoothness is measured through a weighted space $\|\cdot \|_{\mathbf W^{(k)}}$. The second term of Eqn.~\ref{eqn-energytrans} aggregates the square distance between the updated node embedding $\mathbf z_u$ and the last-layer embedding $\mathbf z_u^{(k)}$ after transformation of $(\beta + d_u^{(k)}) \mathbf W^{(k)}$. Overall, the objective of Eqn.~\ref{eqn-energytrans} formulates a graph signal denoising problem defined over $N$ nodes in a system that aims at smoothing the node embeddings via two-fold regularization effects~\cite{globallocal-2003} (as illustrated in Fig.~\ref{fig:theory}(a)): the first term penalizes the global smoothness among node embeddings through the proximity defined by $\mathbf P^{(k)}$; the second term penalizes the change of node embeddings from the ones prior to the propagation.

The theorem reveals that while the layer-wise updating rule adopted by either GNNs or Transformers can be unified as a descent step on a regularized energy, these two model backbones contribute to obviously different smoothness effects. For GNNs that use graph adjacency as the propagation matrix, in which situation $p_{uv}^{(k)} = 0$ for $(u, v)$'s that are disconnected in the graph, the energy only enforces global smoothness over neighboring nodes in the graph. In contrast, Transformers using all-pair attention induce the energy regularizing the global smoothness over arbitrary node pairs. The latter breaks the restriction of observed graphs and can facilitate leveraging the unobserved interactions for better representations. On the other hand, all-pair attention discards the input graph, which can play a useful inductive bias role in learning informative representations (especially when the observed structures strongly correlate with downstream labels). In light of the analysis, we next consider a hybrid propagation layer that synthesizes the effect of both models.

\textbf{A Hybrid Model Backbone.} We define a model backbone with the layer-wise updating rule comprised of three terms:
% \begin{equation}\label{eqn-propagation}
%     \mathbf P^{(k)} = (1-\alpha)\mathbf P^{(k)}_{\mathcal A} + \alpha \mathbf P_{\mathcal G},
% \end{equation}
\begin{equation}\label{eqn-propagation}
    \mathbf Z^{(k+1)} = (1 - \alpha)\mathbf P^{(k)}_A \mathbf Z^{(k)} \mathbf W^{(k)} + \alpha\mathbf P_G \mathbf Z^{(k)} \mathbf W^{(k)}  + \beta \mathbf Z^{(k)} \mathbf W^{(k)},
\end{equation}
where $\mathbf P^{(k)}_A$ is an all-pair attention-based propagation matrix specific to the $k$-th layer, $\mathbf P_G$ is a sparse graph-based propagation matrix (associated with input graph $\mathcal G$), and $0\leq\alpha< 1$ is a weight. We assume $\mathbf P^{(k)}_A = [c_{uv}^{(k)}]_{N\times N}$ and $\mathbf P_G = [w_{uv}]_{N\times N}$. Particularly, the hybrid model can be treated as an extension of Eqn.~\ref{eqn-update} where $\mathbf P^{(k)} = (1 - \alpha) \mathbf P^{(k)}_A + \alpha \mathbf P^{(k)}_G$ and specifically
\begin{equation}
    \begin{aligned}
    p_{uv}^{(k)}=\left\{ 
    \begin{array}{ll}
         & (1-\alpha)c_{uv}^{(k)} + \alpha w_{uv}, \quad \mbox{if} \; (u,v)\in \mathcal E  \\
         & (1-\alpha)c_{uv}^{(k)}, \quad \mbox{if} \; (u,v)\notin \mathcal E.
    \end{array}
    \right. 
    \end{aligned}
\end{equation}
We can extend the result of Theorem~\ref{thm-trans-opt} and naturally derive the regularized energy optimized by the hybrid model.
\begin{corollary}\label{coro-hybrid-energy}
    For any attention-based propagation matrix $\mathbf P^{(k)}_A$ and graph-based propagation matrix $\mathbf P_G$, if $\mathbf W^{(k)}$ is a symmetric matrix, then Eqn.~\ref{eqn-propagation} is a gradient descent step with step size $\frac{1}{2}$ for the optimization problem w.r.t. the quadratic energy $E(\mathbf Z; \mathbf Z^{(k)}, \mathbf P^{(k)}_A, \mathbf P_G, \mathbf W^{(k)})\triangleq$
    \begin{equation}\label{eqn-energytrans2}
    \begin{split}
        & \sum_{u, v} \left [ (1 - \alpha) c_{uv}^{(k)} \left \|\mathbf z_u - \mathbf z_v\right \|^2_{\mathbf W^{(k)}} + \alpha w_{uv} \left \|\mathbf z_u - \mathbf z_v\right \|^2_{\mathbf W^{(k)}} \right ] \\
        & + \sum_{u} \left \| \mathbf z_u -  \left ( \beta + (1 - \alpha) \tilde d_u^{(k)}  + \alpha \overline d_u \right ) \mathbf W^{(k)} \mathbf z_u^{(k)} \right \|_2^2,
    \end{split}
    \end{equation}
    where $\tilde d_u^{(k)} = \sum_{v=1}^N c_{uv}^{(k)}$ and $\overline d_u= \sum_{v=1}^N w_{uv}$.
\end{corollary}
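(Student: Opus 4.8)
The plan is to recognize that the hybrid update of Eqn.~\ref{eqn-propagation} is nothing more than an instance of the generic update of Eqn.~\ref{eqn-update}, so that Theorem~\ref{thm-trans-opt} applies verbatim once the composite propagation matrix is identified. Concretely, I would set $\mathbf P^{(k)} = (1-\alpha)\mathbf P_A^{(k)} + \alpha \mathbf P_G$, i.e.\ $p_{uv}^{(k)} = (1-\alpha) c_{uv}^{(k)} + \alpha w_{uv}$, exactly as in the piecewise definition already given in the text. The key point to note is that this is an arbitrary $N\times N$ matrix and hence a legitimate propagation matrix for the purposes of Theorem~\ref{thm-trans-opt}, which imposes no structural requirement on the propagation matrix (in particular no symmetry) and asks only that $\mathbf W^{(k)}$ be symmetric --- precisely the hypothesis of the corollary.

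With this substitution, the right-hand side of the Theorem~\ref{thm-trans-opt} update $\mathbf P^{(k)}\mathbf Z^{(k)}\mathbf W^{(k)} + \beta \mathbf Z^{(k)}\mathbf W^{(k)}$ expands by linearity into $(1-\alpha)\mathbf P_A^{(k)}\mathbf Z^{(k)}\mathbf W^{(k)} + \alpha \mathbf P_G \mathbf Z^{(k)}\mathbf W^{(k)} + \beta \mathbf Z^{(k)}\mathbf W^{(k)}$, which is exactly Eqn.~\ref{eqn-propagation}. Thus the hybrid layer is already established to be the gradient descent step of step size $\frac{1}{2}$ on the energy $E(\mathbf Z;\mathbf Z^{(k)},\mathbf P^{(k)},\mathbf W^{(k)})$ of Eqn.~\ref{eqn-energytrans}, and all that remains is to re-express that energy in terms of the attention and graph quantities so as to match Eqn.~\ref{eqn-energytrans2}.

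For the first (Dirichlet-type) term I would invoke linearity in the edge weights: since $p_{uv}^{(k)} = (1-\alpha) c_{uv}^{(k)} + \alpha w_{uv}$, the sum $\sum_{u,v} p_{uv}^{(k)}\|\mathbf z_u - \mathbf z_v\|_{\mathbf W^{(k)}}^2$ splits additively into the attention contribution and the graph contribution shown on the first line of Eqn.~\ref{eqn-energytrans2}. For the second (anchoring) term I would compute the effective degree $d_u^{(k)} = \sum_v p_{uv}^{(k)} = (1-\alpha)\sum_v c_{uv}^{(k)} + \alpha \sum_v w_{uv} = (1-\alpha)\tilde d_u^{(k)} + \alpha \overline d_u$, so that the coefficient $\beta + d_u^{(k)}$ appearing in Eqn.~\ref{eqn-energytrans} becomes exactly $\beta + (1-\alpha)\tilde d_u^{(k)} + \alpha \overline d_u$, reproducing the second line of Eqn.~\ref{eqn-energytrans2}.

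I do not anticipate a substantive obstacle: the corollary is a direct reduction to Theorem~\ref{thm-trans-opt}, and the only thing requiring care is the bookkeeping observation that both terms of the energy depend on the propagation matrix in a manner compatible with the convex combination --- linearly through the pairwise weights $p_{uv}^{(k)}$ and linearly through their row sums $d_u^{(k)}$. The mild point worth stating explicitly is that Theorem~\ref{thm-trans-opt} requires no symmetry of the propagation matrix, so combining a dense attention matrix $\mathbf P_A^{(k)}$ with a (possibly asymmetric, normalized) graph matrix $\mathbf P_G$ introduces no constraint beyond the symmetry of $\mathbf W^{(k)}$ already assumed.
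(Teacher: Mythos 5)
Your proposal is correct and follows essentially the same route as the paper, which simply adapts the proof of Theorem~\ref{thm-trans-opt} to the composite propagation matrix $\mathbf P^{(k)} = (1-\alpha)\mathbf P_A^{(k)} + \alpha\mathbf P_G$. Your explicit bookkeeping — splitting the Dirichlet term by linearity in $p_{uv}^{(k)}$ and decomposing the row sum $d_u^{(k)} = (1-\alpha)\tilde d_u^{(k)} + \alpha\overline d_u$ — is exactly the "similar reasoning line" the paper gestures at, spelled out in full.
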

% \begin{equation}\label{eqn-energyours}
% \begin{split}
%     & \min_{\mathbf Z} ~~
%     (1-\alpha)\mbox{tr}\left( \mathbf Z \mathbf P^{(k)}_{\mathcal A} \mathbf Z\right ) + \alpha \mbox{tr}\left( \mathbf Z \mathbf P_{\mathcal G} \mathbf Z\right ) \\
%     & + \left \| \mathbf Z - \left (\beta \mathbf I+ (1-\alpha)\tilde{\mathbf D}^{(k)}_{\mathcal A} + \alpha \tilde{\mathbf D}_{\mathcal G} \right ) \mathbf Z^{(k)} \right \|_{\mathcal F}^2,
% \end{split}
% \end{equation}
% where $\tilde{\mathbf D}^{(k)}_{\mathcal A}$ (resp. $\tilde{\mathbf D}_{\mathcal G}$) denotes the diagonal in-degree matrix associated with $\mathbf P^{(k)}_{\mathcal A}$ (resp. $\mathbf P_{\mathcal G}$).
\begin{proof}
    The proof for this corollary can be adapted by Theorem~\ref{thm-trans-opt} with the similar reasoning line.
\end{proof}
The hybrid model is capable of accommodating observed structural information and in the meanwhile capturing unobserved interactions beyond input graphs. Such an architectural design incorporates the graph inductive bias into the vanilla Transformer and is adopted by state-the-of-art Transformers on graphs, e.g., ~\cite{nodeformer,graphgps,wu2023difformer}, that show superior performance in different tasks of graph representation learning. 

\textbf{Generalization to Asymmetric Weight Matrix.} The above analysis assumes the weight matrix $\mathbf W^{(k)}$ to be symmetric, which may limit the applicability of the conclusions since in common neural networks the weight matrix can potentially take any value in the entire $\mathbb R^{d\times d}$. In our context, it can be difficult to directly analyze the case of asymmetric $\mathbf W^{(k)}$ and derive any closed form of the energy. However, the following proposition allows us to generalize the conclusion of Theorem~\ref{thm-trans-opt} to arbitrary $\mathbf W^{(k)} \in \mathbb R^{d\times d}$.
\begin{proposition}\label{prop-w-sym}
    For any weight matrix $\mathbf W^{(k)}\in \mathbb R^{d\times d}$, there exists a symmetric matrix $\tilde{\mathbf W}^{(k)}\in \mathbb R^{d\times d}$ such that the updated embeddings $\tilde{\mathbf Z}^{(k+1)} = \mathbf P^{(k)} \mathbf Z^{(k)} \tilde{\mathbf W}^{(k)} + \beta \mathbf Z^{(k)} \tilde{\mathbf W}^{(k)}$ yield $\|\tilde{\mathbf Z}^{(k+1)} - \mathbf Z^{(k+1)}\|<\epsilon$, $\forall \epsilon>0$. 
\end{proposition}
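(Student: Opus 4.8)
The plan is to exploit that the update is \emph{linear} in the weight matrix. Writing $\mathbf M := (\mathbf P^{(k)} + \beta \mathbf I)\mathbf Z^{(k)} \in \mathbb R^{N\times d}$, both updates collapse to $\mathbf Z^{(k+1)} = \mathbf M \mathbf W^{(k)}$ and $\tilde{\mathbf Z}^{(k+1)} = \mathbf M \tilde{\mathbf W}^{(k)}$, so the goal reduces to producing a symmetric $\tilde{\mathbf W}^{(k)}$ with $\|\mathbf M(\tilde{\mathbf W}^{(k)} - \mathbf W^{(k)})\| < \epsilon$. The first observation I would record is that the output only sees how $\mathbf W^{(k)}$ acts on the row space $\mathcal R := \operatorname{row}(\mathbf M)$: any change to $\mathbf W^{(k)}$ whose effect lies in $\mathcal R^{\perp}$ leaves $\mathbf M \mathbf W^{(k)}$ untouched. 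This is the degree of freedom I would spend on enforcing symmetry.

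Concretely, I would fix an orthonormal basis adapted to $\mathbb R^d = \mathcal R \oplus \mathcal R^{\perp}$ and write $\mathbf W^{(k)}$ and the candidate $\tilde{\mathbf W}^{(k)}$ in block form,
\[
\mathbf W^{(k)} = \begin{pmatrix} A & B \\ C & D \end{pmatrix}, \qquad
\tilde{\mathbf W}^{(k)} = \begin{pmatrix} \tilde A & \tilde B \\ \tilde B^{\top} & \tilde D \end{pmatrix},
\]
where the off-diagonal placement encodes the symmetry of $\tilde{\mathbf W}^{(k)}$. Since every row of $\mathbf M$ has zero component along $\mathcal R^{\perp}$, the rows of $\mathbf M \tilde{\mathbf W}^{(k)}$ depend only on the top block row $(\tilde A, \tilde B)$, so matching $\mathbf M \tilde{\mathbf W}^{(k)}$ to $\mathbf M \mathbf W^{(k)}$ amounts to $\tilde A \approx A$ and $\tilde B \approx B$. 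The off-diagonal requirement is free: I would set $\tilde B = B$ exactly, dropping the transposed copy $\tilde B^{\top}$ into the invisible $\mathcal R^{\perp}$ rows at no cost and taking $\tilde D = 0$. Thus the whole problem localizes to approximating the single diagonal block $A$, the compression of $\mathbf W^{(k)}$ onto $\mathcal R$, by a symmetric $\tilde A$.

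The main obstacle is precisely this last block: a symmetric $\tilde A$ can approximate $A$ no better than the norm of its skew part $\tfrac12(A - A^{\top})$, so whenever $\mathcal R$ has dimension at least two and $\mathbf W^{(k)}$ is genuinely asymmetric on it, no symmetric choice drives the error to $0$. Indeed, if $\mathbf M$ has full column rank the residual is bounded below by $\sigma_{\min}(\mathbf M)\,\|\tfrac12(\mathbf W^{(k)} - (\mathbf W^{(k)})^{\top})\|$, which is strictly positive. The route I would therefore take is to leverage the smoothing nature of the propagation established in Theorem~\ref{thm-trans-opt}: graph and attention propagation concentrate the energy of $\mathbf Z^{(k)}$, hence of $\mathbf M$, into a low-rank (in the extreme, rank-one) signal, so that $\mathcal R$ is effectively one-dimensional. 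On a one-dimensional subspace the block $A$ is a scalar and automatically symmetric, making $\tilde A = A$ exact. Quantitatively I would bound $\|\mathbf M(\tilde{\mathbf W}^{(k)} - \mathbf W^{(k)})\|$ by the product of the asymmetry $\|\mathbf W^{(k)} - (\mathbf W^{(k)})^{\top}\|$ and the trailing singular-value mass of $\mathbf M$ beyond its dominant direction.

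The delicate point, and the step I expect to be hardest, is making the ``for all $\epsilon$'' quantifier rigorous without an explicit low-rank hypothesis, since purely algebraically a full-column-rank $\mathbf M$ obstructs exact symmetrization. The honest conclusion is an approximation whose error is controlled by how close the propagated embeddings are to low rank; I would accordingly either state the proposition under that condition, or phrase it as a limiting argument in which one approximates $\mathbf W^{(k)}$ by matrices that are exactly representable by a symmetric weight and pushes the residual, scaled by $\|\mathbf M\|$, to zero along the sequence.
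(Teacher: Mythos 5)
Your proposal does not prove the proposition, and you acknowledge this yourself: it ends by suggesting the statement be weakened (low-rank $\mathbf M$) or reformulated, rather than established. The paper's proof takes a route your argument never considers: it imports Theorem 9 / Lemma 26 / Theorem 27 of \cite{yang2021implicit}, whose mechanism is a \emph{similarity transformation} rather than a direct symmetric approximation of $\mathbf W^{(k)}$ in the original coordinates. Over $\mathbb C$, every square matrix is similar to a complex symmetric matrix, and diagonalizable matrices are dense, so one writes $\mathbf W^{(k)}$ (approximately) as $\mathbf T^{-1}\tilde{\mathbf W}^{(k)}\mathbf T$ with $\tilde{\mathbf W}^{(k)}$ Hermitian and $\mathbf T$ right-invertible, possibly into a different dimension $d'$; the symmetric weight then acts on the transformed embeddings, and a final step passes from $\mathbb C$ back to $\mathbb R$. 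The point is that the symmetric matrix is \emph{not} required to be close to $\mathbf W^{(k)}$ itself --- only the composite layer output is --- and this is exactly the degree of freedom your block decomposition cannot access, because you insist that $\tilde{\mathbf W}^{(k)}$ multiply the same $\mathbf M = (\mathbf P^{(k)}+\beta\mathbf I)\mathbf Z^{(k)}$ with no change of basis. That conjugation idea is the missing ingredient; the rank of the propagated embeddings plays no role in the paper's argument.

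That said, your obstruction is correct linear algebra and is worth taking seriously: if $\mathbf M$ has full column rank, then over all real symmetric $\tilde{\mathbf W}^{(k)}\in\mathbb R^{d\times d}$ one has $\inf\|\mathbf M(\tilde{\mathbf W}^{(k)}-\mathbf W^{(k)})\| \ge \sigma_{\min}(\mathbf M)\,\|\tfrac12(\mathbf W^{(k)}-(\mathbf W^{(k)})^{\top})\| > 0$, so the proposition \emph{as literally written} --- same embeddings, same dimension $d$, no output transformation --- cannot hold for a genuinely asymmetric $\mathbf W^{(k)}$. The paper's own proof silently carries the extra map $\mathbf T\in\mathbb C^{d'\times d}$ from the cited result and then ``assumes $\mathbf T\in\mathbb C^{d\times d}$'' so that it disappears from the final statement; your analysis shows this suppression is not innocuous. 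But diagnosing that tension is not the same as proving the claim, and the low-rank salvage you propose is neither what the paper does nor sufficient to recover the assertion in the stated generality.
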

\begin{proof}
    We extend the proof of Theorem 9 in \cite{yang2021implicit} to our case. The updating rule considered in Lemma 26 of \cite{yang2021implicit} can be replaced by our updating rule $\mathbf Z^{(k+1)} = \mathbf P^{(k)} \mathbf Z^{(k)} \mathbf W^{(k)} + \beta \mathbf Z^{(k)} \mathbf W^{(k)}$, so that the updated embeddings $\mathbf Z^{(k+1)}$ are continuous w.r.t. $\mathbf W^{(k)}$. Then similar to Theorem 27 of \cite{yang2021implicit}, we can prove that for any weight matrix $\mathbf W^{(k)}\in \mathbb R^{d\times d}$, there exist $\tilde{\mathbf Z}^{(k+1)} \in \mathbb C^{N\times d}$, right-invertible $\mathbf T \in \mathbb C^{d'\times d}$ and herimitia $\tilde{\mathbf W}^{(k)}\in \mathbb C^{N\times d'}$ such that $\tilde{\mathbf Z}^{(k+1)} = \mathbf P^{(k)} \mathbf Z^{(k)} \tilde{\mathbf W}^{(k)} + \beta \mathbf Z^{(k)} \tilde{\mathbf W}^{(k)}$ and
    \begin{equation}
        \|\tilde{\mathbf Z}^{(k+1)} - \mathbf Z^{(k+1)}\|<\epsilon, \forall \epsilon>0.
    \end{equation}
    By assuming $\mathbf T \in \mathbb C^{d\times d}$, we prove the conclusion in the complex domain. Then we can use the same technique as the proof after Theorem 27 in Appendix E.3 of \cite{yang2021implicit} to generalize the conclusion from the complex domain to the real domain.
\end{proof}
This suggests that for any propagation layer (as defined by Eqn.~\ref{eqn-update}) with $\mathbf W^{(k)}\in \mathbb R^{d\times d}$ that is even asymmetric, we can find a surrogate matrix $\tilde{\mathbf W}^{(k)}$ that is symmetric, such that the latter produces the node embeddings which can be arbitrarily close to the ones produced by $\mathbf W^{(k)}$. Pushing further, one-layer updates of the message passing model corresponds to a descent step on the regularized energy $E(\mathbf Z; \mathbf Z^{(k)}, \mathbf P^{(k)}, \tilde{\mathbf W}^{(k)})$.

\subsection{Reduction from Multi-Layer to One-Layer}\label{sec-theory-onelayer}

The analysis so far targets the updates on embeddings of one propagation layer, yet the model practically used for computing representations often stacks multiple propagation layers. While using deep propagation may endow the model with desired expressivity, it also increases the model complexity and hinders its scalability to large graphs. We next zoom in on whether using multi-layer propagation is a necessary condition for satisfactory expressiveness for learning representations. On top of this, the analysis suggests a potential way to simplify the Transformer architecture for learning on large graphs. 

\textbf{Multi-Layer v.s. One-Layer Models.} The analysis in Sec.~\ref{sec-theory-onelayer} reveals the equivalence between the embedding updates of one propagation layer and one-step gradient descent on the regularized energy. Notice that since the attention matrix $\mathbf P^{(k)}_{\mathcal A}$ (dependent on $\mathbf Z^{(k)}$) and the feature transformation $\mathbf W^{(k)}$ vary at different layers, the energy objective optimized by the model (Eqn.~\ref{eqn-update} or Eqn.~\ref{eqn-propagation}) is also specific to each layer. In this regard, the multi-layer model, which is commonly adopted by existing Transformers, can be seen as a cascade of descent steps on layer-dependent energy objectives. From this viewpoint, there potentially exists certain redundancy in the optimization process for graph signal processing, since the descent steps of different layers pursue different targets and may interfere with each other. 
To resolve this issue, we introduce the next theorem that further suggests a principled way to simplify the Transformer model, and particularly, we can construct a single-layer model that yields the same denoising effect as the multi-layer counterpart.
\begin{theorem}\label{thm-trans-opt-equi}
    For any $K$-layer model (where $K$ is an arbitrary positive integer) whose layer-wise updating rule is defined by Eqn.~\ref{eqn-propagation} producing the output embeddings $\mathbf Z^{(K)}$, there exists a (sparse) graph-based propagation matrix $\mathbf P^*_G=[w^*_{uv}]_{N\times N}$, a dense attention-based propagation matrix $\mathbf P^*_A = [c^*_{uv}]_{N\times N}$, and a symmetric weight matrix $\mathbf W^*\in \mathbb R^{d\times d}$ such that one gradient descent step for optimization (from the initial point $\mathbf Z^{(0)} = [\mathbf z_u^{(0)}]_{u=1}^N$) w.r.t. the surrogate energy
    $E^*(\mathbf Z; \mathbf Z^{(k)}, \mathbf P^*_A, \mathbf P^*_G, \mathbf W^*)\triangleq$
    \begin{equation}\label{eqn-energyeqv}
    \begin{split}
        & \sum_{u, v} \left [ (1 - \alpha) c_{uv}^* \left \|\mathbf z_u - \mathbf z_v\right \|^2_{\mathbf W^*} + \alpha w_{uv}^* \left \|\mathbf z_u - \mathbf z_v\right \|^2_{\mathbf W^*} \right ] \\
        & + \sum_{u} \left \| \mathbf z_u -  \left ( (1 - \alpha) \tilde d_u^{(k)}  + \alpha \overline d_u  \right ) \mathbf W^*\mathbf z_u^{(k)} \right \|_2^2,
    \end{split}
    \end{equation}
    where $\tilde d_u^{(k)} = \sum_{v=1}^N c_{uv}^{(k)}$ and $\overline d_u= \sum_{v=1}^N w_{uv}$, yields node embeddings $\mathbf Z^*$ satisfying $\|\mathbf Z^* - \mathbf Z^{(K)}\|<\epsilon$, $\forall \epsilon >0$. 
%     \begin{equation}\label{eqn-energyeqv}
% \begin{split}
%     & \min_{\mathbf Z}~~ 
%     (1-\alpha)\mbox{tr}\left( \mathbf Z \mathbf P^*_{\mathcal A} \mathbf Z\right ) + \alpha \mbox{tr}\left( \mathbf Z \mathbf P^*_{\mathcal G} \mathbf Z\right ) \\
%     & + \left \| \mathbf Z - \left (\mathbf I+ (1-\alpha)\tilde{\mathbf D}^*_{\mathcal A} + \alpha \tilde{\mathbf D}_{\mathcal G}^* \right ) \mathbf Z^{(0)} \right \|_{\mathcal F}^2,
% \end{split}
% \end{equation}
% \begin{equation}\label{eqn-energyeqv}
% \begin{split}
%     & \min_{\mathbf Z}~~ 
%     (1-\alpha)\mbox{tr}\left( \mathbf Z \mathbf P^*_{\mathcal A} \mathbf Z\right ) + \alpha \mbox{tr}\left( \mathbf Z \mathbf P^*_{\mathcal G} \mathbf Z\right ) \\
%     & + \left \| \mathbf Z - \left (\mathbf I+ (1-\alpha)\tilde{\mathbf D}^*_{\mathcal A} + \alpha \tilde{\mathbf D}_{\mathcal G}^* \right ) \mathbf Z^{(0)} \right \|_{\mathcal F}^2,
% \end{split}
% \end{equation}
% where $\tilde{\mathbf D}^*_{\mathcal A}$ (resp. $\tilde{\mathbf D}^*_{\mathcal G}$) denotes the diagonal in-degree matrix associated with $\mathbf P^*_{\mathcal A}$ (resp. $\mathbf P^*_{\mathcal G}$), yields the output embeddings $\mathbf Z^{(K)}$ of the $K$-layer hybrid model.
\end{theorem}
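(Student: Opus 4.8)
The plan is to exploit the linearity of the hybrid update in Eqn.~\ref{eqn-propagation} to collapse the $K$-layer composition into a single left-and-right matrix product, and then to invoke Proposition~\ref{prop-w-sym} to absorb the resulting (generically asymmetric) feature matrix into a symmetric surrogate at the cost of an arbitrarily small error. Concretely, writing $\mathbf{M}^{(k)} = (1-\alpha)\mathbf{P}^{(k)}_A + \alpha\mathbf{P}_G + \beta\mathbf{I}$, each layer reads $\mathbf{Z}^{(k+1)} = \mathbf{M}^{(k)}\mathbf{Z}^{(k)}\mathbf{W}^{(k)}$. Because the node-mixing factors $\mathbf{M}^{(k)}$ act on the left ($N\times N$) while the feature maps $\mathbf{W}^{(k)}$ act on the right ($d\times d$), they pass across one another and the recursion collapses to $\mathbf{Z}^{(K)} = \mathbf{M}^*\mathbf{Z}^{(0)}\mathbf{W}_{\mathrm{prod}}$ with $\mathbf{M}^* = \mathbf{M}^{(K-1)}\cdots\mathbf{M}^{(0)}$ and $\mathbf{W}_{\mathrm{prod}} = \mathbf{W}^{(0)}\cdots\mathbf{W}^{(K-1)}$.

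Next I would realize $\mathbf{M}^*$ as a single hybrid propagation matrix. Setting $\mathbf{P}^* \triangleq \mathbf{M}^*$, I would split it as $\mathbf{P}^* = (1-\alpha)\mathbf{P}^*_A + \alpha\mathbf{P}^*_G$ by routing the entries supported on edges of $\mathcal{E}$ into the sparse part $\mathbf{P}^*_G = [w^*_{uv}]$ (taking $\mathbf{P}^*_G = \mathbf{0}$ is already admissible) and placing the remaining dense entries into $\mathbf{P}^*_A = [c^*_{uv}]$; this is always possible since $0 \le \alpha < 1$ gives $1-\alpha > 0$ and the attention and graph weights are unconstrained in value. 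Defining $\tilde d_u^* = \sum_v c^*_{uv}$ and $\overline d_u^* = \sum_v w^*_{uv}$ as the corresponding row sums, Corollary~\ref{coro-hybrid-energy} with $\beta = 0$ shows that one gradient descent step of size $\tfrac12$ on the surrogate energy $E^*$ (Eqn.~\ref{eqn-energyeqv}) started from $\mathbf{Z}^{(0)}$ produces exactly $\mathbf{Z}^* = \mathbf{P}^*\mathbf{Z}^{(0)}\mathbf{W}^*$; the diagonal $\beta\mathbf{I}$ of the original layers has already been folded into $\mathbf{M}^*$, which is why $E^*$ carries no separate $\beta$ term.

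It then remains to match the feature sides. The matrix $\mathbf{W}_{\mathrm{prod}}$ is a product of arbitrary per-layer weights and is in general asymmetric, so it cannot directly play the role of the symmetric $\mathbf{W}^*$ demanded by the quadratic energy $E^*$. Viewing $\mathbf{Z}^{(K)} = \mathbf{P}^*\mathbf{Z}^{(0)}\mathbf{W}_{\mathrm{prod}}$ as a single propagation layer with propagation matrix $\mathbf{P}^*$, weight $\mathbf{W}_{\mathrm{prod}}$, and $\beta = 0$, I would apply Proposition~\ref{prop-w-sym} to obtain a symmetric $\mathbf{W}^* \in \mathbb{R}^{d\times d}$ for which $\|\mathbf{P}^*\mathbf{Z}^{(0)}\mathbf{W}^* - \mathbf{P}^*\mathbf{Z}^{(0)}\mathbf{W}_{\mathrm{prod}}\| < \epsilon$. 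Chaining this with the previous step yields $\|\mathbf{Z}^* - \mathbf{Z}^{(K)}\| < \epsilon$ for every $\epsilon > 0$, which is the claim.

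I expect the only genuinely non-routine step to be this last one: the unrolling and the hybrid decomposition are exact and purely algebraic, whereas the surrogate energy is well-posed only for symmetric weights, and the asymmetry of $\mathbf{W}_{\mathrm{prod}}$ is exactly what forces the conclusion to be an arbitrarily close approximation rather than an exact equality. The whole argument therefore rests on Proposition~\ref{prop-w-sym}, and behind it the Hermitian-approximation construction of \cite{yang2021implicit}, to control that gap. A secondary point to handle carefully is that the degrees entering $E^*$ must be taken as the row sums of the chosen $\mathbf{P}^*_A$ and $\mathbf{P}^*_G$, so that the degree terms cancel in the gradient exactly as in the proof of Theorem~\ref{thm-trans-opt} and the clean update $\mathbf{Z}^* = \mathbf{P}^*\mathbf{Z}^{(0)}\mathbf{W}^*$ emerges.
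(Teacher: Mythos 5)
Your proposal is correct and follows essentially the same route as the paper: unroll the linear recursion into $\mathbf Z^{(K)} = \overline{\mathbf P}^{*}\mathbf Z^{(0)}\mathbf W_{\mathrm{prod}}$, split the accumulated propagation matrix into a hybrid $(1-\alpha)\mathbf P^*_A + \alpha\mathbf P^*_G$ form, verify that one gradient step of size $\tfrac12$ on $E^*$ reproduces $\overline{\mathbf P}^{*}\mathbf Z^{(0)}\mathbf W^*$, and invoke Proposition~\ref{prop-w-sym} to replace the generically asymmetric $\mathbf W_{\mathrm{prod}}$ by a symmetric $\mathbf W^*$ up to arbitrarily small error. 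Your closing remark that the degrees in $E^*$ must be the row sums of the chosen starred matrices for the cancellation to go through is a point the paper's statement handles somewhat loosely, and you are right to flag it.
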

\begin{proof}
    By definition, the $K$-layer model is comprised of $K$ feed-forward updating layers each of which adopts the propagation $\mathbf P^{(k)} = (1-\alpha)\mathbf P^{(k)}_A + \alpha \mathbf P_G$ (where notice that $\mathbf P^{(k)}_A$ is computed by $\mathbf Z^{(k)}$) to update the node embeddings from $\mathbf Z^{(k)}$ to $\mathbf Z^{(k+1)}$. The feed-forward computation yields a sequence of results $\mathbf Z^{(0)}, \mathbf P^{(0)}, \mathbf Z^{(1)}, \mathbf P^{(1)}, \cdots, \mathbf Z^{(K)}$. We introduce an augmented propagation matrix $\overline{\mathbf P}^{(k)}$ and rewrite Eqn.~\ref{eqn-propagation} as
    \begin{equation}
        \mathbf Z^{(k+1)} = (\mathbf P^{(k)} + \beta \mathbf I) \mathbf Z^{(k)} \mathbf W^{(k)} = \overline{\mathbf P}^{(k)} \mathbf Z^{(k)} \mathbf W^{(k)}.
    \end{equation}
    By stacking $K$ layers of propagation, we can denote the output embeddings as
    \begin{equation}\label{eqn-k-prop}
    \begin{split}
        \mathbf Z^{(K)} & = \overline{\mathbf P}^{(K-1)} \cdots \overline{\mathbf P}^{(0)} \mathbf Z^{(0)} \mathbf W^{(0)} \cdots \mathbf W^{(K-1)} \\
        & = \overline{\mathbf P}^* \mathbf Z^{(0)} \tilde{\mathbf W}^*.
    \end{split}
    \end{equation}
    We next conclude the proof by construction. Assume 
    \begin{equation}
        \mathbf P^*_A = \frac{1}{1-\alpha} \left (\overline{\mathbf P}^* - \alpha \mathbf P^*_G \right ),
    \end{equation}
    where $\mathbf P^*_G$ is a sparse matrix associated to $\mathcal G$ which can be arbitrarily given, e.g., $\mathbf P^*_G = \mathbf P_G^K$. The latter becomes the $K$-order (normalized) adjacency matrix if $\mathbf P_G$ is the (normalized) adjacency matrix. Then we consider the optimization problem w.r.t. the energy $E^*(\mathbf Z; \mathbf Z^{(0)}, \mathbf P^*_A, \mathbf P^*_G, \mathbf W^*)$ defined by Eqn.~\ref{eqn-energyeqv}. From the initial point $\mathbf Z^{(0)} = [\mathbf z_u^{(0)}]_{u=1}^N$, the gradient w.r.t. $\mathbf Z$ can be evaluated by
    \begin{equation}
    \begin{split}
    & \left. \frac{\partial E^*(\mathbf Z; \mathbf Z^{(0)}, \mathbf P^*_A, \mathbf P^*_G, \mathbf W^*)}{\partial \mathbf Z} \right |_{\mathbf Z = \mathbf Z^{(0)}} \\
    = & 2 (1-\alpha) (\mathbf D^*_A - \mathbf P^*_A) \mathbf Z^{(0)} \mathbf W^*  + \alpha (\mathbf D^*_G - \mathbf P^*_G) \mathbf Z^{(0)} \mathbf W^* \\
    & + 2 \left [ \mathbf Z^{(0)} - \left ( (1 - \alpha)\mathbf D^*_A + \alpha \mathbf D^*_G \right ) \mathbf Z^{(0)}\mathbf W^*  \right ] \\
    = &  2 \mathbf Z^{(0)} - (1-\alpha) \mathbf P^*_A \mathbf Z^{(0)} \mathbf W^* - 2 \alpha \mathbf P^*_G \mathbf Z^{(0)}\mathbf W^* .
    \end{split}
    \end{equation}
    Inserting the gradient into the updating and using step size $\frac{1}{2}$ will induce the descent step:
    \begin{equation}
    % \begin{split}
        \mathbf Z^{(0)} - \frac{1}{2} \left. \frac{\partial E^*(\mathbf Z; \mathbf Z^{(0)},\mathbf P^*_A, \mathbf P^*_G, \mathbf W^*)}{\partial \mathbf Z} \right |_{\mathbf Z = \mathbf Z^{(0)}} 
        = \overline{\mathbf P}^* \mathbf Z^{(0)} \mathbf W^*.
    % \end{split}
    \end{equation}
    We thus have shown that solving the optimization problem w.r.t. Eqn.~\ref{eqn-energyeqv} via one-step gradient descent induces the output embeddings $\mathbf Z^*=\overline{\mathbf P}^* \mathbf Z^{(0)} \mathbf W^* $. According to Eqn.~\ref{eqn-k-prop} and using Proposition~\ref{prop-w-sym}, we have $\|\mathbf Z^* - \mathbf Z^{(K)}\|<\epsilon$, $\forall \epsilon >0$.
\end{proof}
This theorem indicates that for any multi-layer model whose propagation layers are defined by Eqn.~\ref{eqn-propagation}, its induced optimization trajectories composed of multiple gradient descent steps (updating node embeddings from $\mathbf Z^{(0)}$ to $\mathbf Z^{(K)}$) can be reduced to one-step gradient descent on the energy objective of Eqn.~\ref{eqn-energyeqv}. 
The latter produces the node embeddings that have negligibly small approximation error compared to the ones yielded by the multi-layer model. Based on Theorem~\ref{thm-trans-opt-equi} and extending the analysis of Sec.~\ref{sec-theory-hybrid}, we can arrive at the following result that suggests a simplified one-layer model.
\begin{corollary}\label{coro-one-layer}
    For any $K$-layer model whose layer-wise updating rule is defined by Eqn.~\ref{eqn-propagation}, there exists a graph-based propagation matrix $\mathbf P_G^*$, an attention-based propagation matrix $\mathbf P_A^{(k)}$, and a weight matrix $\mathbf W^*\in \mathbb R^{d\times d}$, such that the one-layer model with the updating rule
    \begin{equation}\label{eqn-onelayer-model}
    \mathbf Z^* = (1-\alpha) \mathbf P^*_A \mathbf Z^{(0)} \mathbf W^* + \alpha \mathbf P^*_G \mathbf Z^{(0)} \mathbf W^*,
    \end{equation}
    yields the equivalent result with up to negligible approximation error $\|\mathbf Z^* - \mathbf Z^{(K)}\|<\epsilon$, $\forall \epsilon >0$.
\end{corollary}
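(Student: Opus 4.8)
The plan is to obtain this corollary directly from Theorem~\ref{thm-trans-opt-equi}, by recognizing that the closed form of the single gradient-descent step constructed there is already identical to the hybrid one-layer updating rule of Eqn.~\ref{eqn-onelayer-model}. First I would reuse the constructions from the proof of Theorem~\ref{thm-trans-opt-equi}: collapse the $K$ feed-forward layers into the products $\overline{\mathbf P}^* = \overline{\mathbf P}^{(K-1)}\cdots \overline{\mathbf P}^{(0)}$ and $\tilde{\mathbf W}^* = \mathbf W^{(0)}\cdots \mathbf W^{(K-1)}$ so that $\mathbf Z^{(K)} = \overline{\mathbf P}^* \mathbf Z^{(0)} \tilde{\mathbf W}^*$ as in Eqn.~\ref{eqn-k-prop}, fix $\mathbf P^*_G$ to be any sparse matrix associated with $\mathcal G$ (e.g.\ $\mathbf P_G^K$), and set $\mathbf P^*_A = \frac{1}{1-\alpha}(\overline{\mathbf P}^* - \alpha \mathbf P^*_G)$. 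By construction these choices satisfy $(1-\alpha)\mathbf P^*_A + \alpha \mathbf P^*_G = \overline{\mathbf P}^*$, which is the identity I will exploit next.

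The key step is then purely algebraic. Theorem~\ref{thm-trans-opt-equi} shows that one gradient-descent step with step size $\frac{1}{2}$ on the surrogate energy $E^*$ from the initial point $\mathbf Z^{(0)}$ yields $\mathbf Z^* = \overline{\mathbf P}^* \mathbf Z^{(0)} \mathbf W^*$. Substituting $\overline{\mathbf P}^* = (1-\alpha)\mathbf P^*_A + \alpha \mathbf P^*_G$ into this expression, the update factorizes as $\mathbf Z^* = (1-\alpha)\mathbf P^*_A \mathbf Z^{(0)}\mathbf W^* + \alpha \mathbf P^*_G \mathbf Z^{(0)}\mathbf W^*$, which is precisely the one-layer model of Eqn.~\ref{eqn-onelayer-model}. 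In other words, the gradient step minimizing $E^*$ and the hybrid one-layer propagation are the same map, so it suffices to transfer to the one-layer model the approximation guarantee already established for the gradient step.

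Finally I would import the error bound: by Eqn.~\ref{eqn-k-prop} together with Proposition~\ref{prop-w-sym}, for every $\epsilon > 0$ there is a symmetric $\mathbf W^*$ approximating the generally asymmetric product $\tilde{\mathbf W}^*$ for which $\|\mathbf Z^* - \mathbf Z^{(K)}\| < \epsilon$, and this bound is inherited verbatim by Eqn.~\ref{eqn-onelayer-model}. The one genuine subtlety, and what I expect to be the main obstacle, is the treatment of the weight matrix. If exact equality were all that is required one could simply take $\mathbf W^* = \tilde{\mathbf W}^*$ and obtain $\mathbf Z^* = \mathbf Z^{(K)}$; however, retaining the interpretation of the one-layer update as a gradient step on $E^*$ forces $\mathbf W^*$ to be symmetric, and the gap between an arbitrary real product $\tilde{\mathbf W}^*$ and its symmetric surrogate is exactly what makes the statement hold only up to the negligible error $\epsilon$ rather than with equality. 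Since this gap is handled entirely by Proposition~\ref{prop-w-sym}, no new estimate is needed beyond invoking it, and the corollary follows.
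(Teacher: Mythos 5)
Your proposal is correct and follows essentially the same route as the paper: both reuse the construction of $\overline{\mathbf P}^*$, $\mathbf P^*_A$, $\mathbf P^*_G$ from Theorem~\ref{thm-trans-opt-equi}, observe that the single gradient step on $E^*$ factorizes exactly into the hybrid one-layer update of Eqn.~\ref{eqn-onelayer-model}, and invoke Proposition~\ref{prop-w-sym} to absorb the asymmetry of the collapsed weight product (the paper merely bookkeeps the error as two $\tfrac{\epsilon}{2}$ terms joined by a triangle inequality rather than importing the bound in one step). Your closing remark — that exact equality $\mathbf Z^* = \mathbf Z^{(K)}$ is attainable by taking $\mathbf W^* = \tilde{\mathbf W}^*$ and that the $\epsilon$ slack is only needed to preserve the gradient-descent interpretation with a symmetric weight — is a sharper reading of where the approximation actually comes from than the paper makes explicit.
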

\begin{proof}
    Theorem~\ref{thm-trans-opt-equi} indicates that there exists $\mathbf P_G^*$, $\mathbf P_A^{(k)}$, and $\tilde{\mathbf W}^*\in \mathbb R^{d\times d}$ that is symmetric so that one-step gradient descent on $E^*(\mathbf Z; \mathbf Z^{(0)},\mathbf P^*_A, \mathbf P^*_G, \tilde{\mathbf W}^*)$ yields the node embeddings $\tilde{\mathbf Z}^*$ with the approximation error $\|\tilde{\mathbf Z}^* - \mathbf Z^{(K)}\|< \frac{\epsilon}{2}$. Furthermore, similar to the reasoning line of Theorem~\ref{thm-trans-opt}, we can show that one-step gradient descent on $E^*(\mathbf Z; \mathbf Z^{(0)},\mathbf P^*_A, \mathbf P^*_G, \tilde{\mathbf W}^*)$ as defined by Eqn.~\ref{eqn-energyeqv} induces the updating rule
    \begin{equation}
        \tilde{\mathbf Z}^* = (1-\alpha) \mathbf P^*_A \mathbf Z^{(0)} \tilde{\mathbf W}^* + \alpha \mathbf P^*_G \mathbf Z^{(0)} \tilde{\mathbf W}^*.
    \end{equation}
    By comparing $\tilde{\mathbf Z}^*$ and $\mathbf Z^*$ in Eqn.~\ref{eqn-onelayer-model} and applying Proposition~\ref{prop-w-sym}, we have the result $\|\tilde{\mathbf Z}^* - \mathbf Z^*\|<\frac{\epsilon}{2}$. Then the corollary can be obtained using the triangle inequality.
\end{proof}
In this sense, the single-layer model produces nearly the same denoising effect (where the approximation error can be arbitrarily small) as the multi-layer model, and the output node embeddings exhibit in the same way to leverage the global information from other nodes. A more intuitive illustration is provided in Fig.~\ref{fig:theory}(b).

\subsection{Implications for Model Designs}\label{sec-theory-summary}

As conclusion of the analysis in this section, we frame our main results as the following two statements and shed more insights on practical model designs, especially for building powerful and scalable Transformers on large graphs.

\begin{itemize}
    \item \textbf{Statement 1.} The layer-wise propagation rule of the generic message passing model (Eqn.~\ref{eqn-update}) is equivalent to an optimization step for the objective of graph signal denoising (Theorem~\ref{thm-trans-opt}). The latter regularizes two-fold smoothness (as illustrated by Fig.~\ref{fig:theory}(a)) and this principled viewpoint further induces a hybrid propagation layer synthesizing the advantage of GNNs and all-pair global attention (Corollary~\ref{coro-hybrid-energy}).
    \item \textbf{Statement 2.} From the perspective of graph signal denoising, stacking multiple propagation layers is not a necessity for achieving the desired expressiveness, since there exists a one-layer model producing the equivalent embeddings as the multi-layer model (Theorem~\ref{thm-trans-opt-equi} and Corollary~\ref{coro-one-layer}). Moreover, as compared in Fig.~\ref{fig:theory}(b), the multi-layer model optimizes different objectives at each layer, while the one-layer model contributes to a steepest descent step on a single fixed objective, reducing the potential redundancy. 
\end{itemize}
Therefore, from this standpoint, the multi-layer model can be simplified to the single-layer model without sacrificing the expressiveness for representation. This result enlightens a potential way to build efficient and powerful Transformers on large graphs, as will be exemplified in the next section.

\begin{figure*}[t!]
  \centering
  \includegraphics[width=0.95\linewidth]{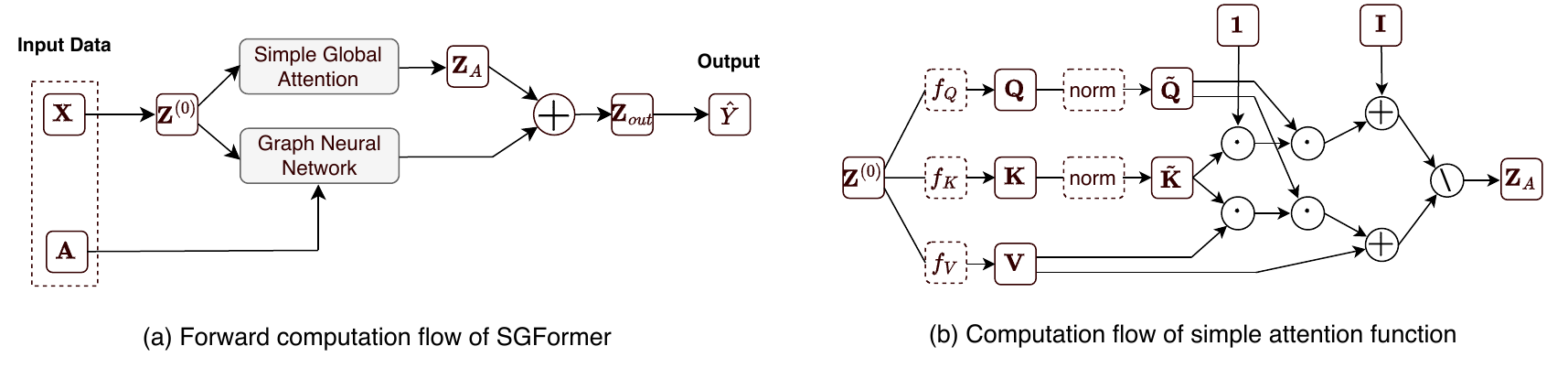}
  % \vspace{-5pt}
  \caption{(a) Data flow of SGFormer. The input data entails node features $\mathbf X$ and graph adjacency $\mathbf A$. %For large graphs, we need to use mini-batch sampling that randomly partitions the input graph into mini-batches with smaller sizes. Each mini-batch is composed of the features of the nodes within this mini-batch $\mathbf X_m$ and the local graph adjacency $\mathbf A_m$ (one can also use neighbor sampling as an alternative). The mini-batch data $(\mathbf X_m, \mathbf A_m)$ (for large graphs) or the whole graph data $(\mathbf X, \mathbf A)$ (for small graphs) will be fed into the SGFormer 
  SGFormer is comprised of a single-layer global attention and a GNN network. The model outputs node representations for final prediction. (b) Computation flow of the simple attention function utilized by SGFormer which accommondates all-pair influence among $N$ nodes for computing the updated embeddings within $O(N)$ complexity.}
  \label{fig:model}
  % \vspace{-5pt}
\end{figure*}

\section{Proposed Model}\label{sec-model}

In this section, we introduce our model, referred as Simplified Graph Transformer (SGFormer), under the guidance of our theoretical results in Sec.~\ref{sec-theory} (as distilled in Sec.~\ref{sec-theory-summary}). Overall, the architectural design  adopts the hybrid model in Eqn.~\ref{eqn-onelayer-model} and follows the Occam's Razor principle for specific instantiations.
Particularly, SGFormer only requires $O(N)$ complexity for accommodating the all-pair interactions and computing $N$ nodes' representations. This is achieved by a simple attention function which has advanced computational efficiency and is free from any approximation scheme. Apart from the scalability advantage, the light-weighted architecture endows SGFormer with desired capability for learning on large graphs with limited labels.

\subsection{Model Design}\label{sec-model-design}

% With the input of node features $\mathbf X = [\mathbf x_u]_{u=1}^N$ and graph structures $\mathbf A = [a_{uv}]_{N\times N}]$, the model aims at producing effective node embeddings $\mathbf Z = [\mathbf z_u]_{u=1}^N$ as representations used for downstream prediction. The ideal node representations are expected to be informative enough, i.e., encoding the useful information within input data. In the meanwhile, the model needs to be equipped with desired efficiency and scalability such that it can scale to graphs of large sizes. We next illustrate the details of our proposed model whose design philosophy aims at simplifying the Transformer architecture.

We first use a neural layer to map input features $\mathbf X = [\mathbf x_u]_{u=1}^N$ to node embeddings in the latent space, i.e., $\mathbf Z^{(0)} = f_I(\mathbf X)$ where $f_I$ can be a shallow (e.g., one-layer) MLP. Then based on the result of Sec.~\ref{sec-theory-onelayer}, particularly the single-layer model presented in Corollary~\ref{coro-one-layer}, we consider the following hybrid architecture for updating the embeddings:
\begin{equation}\label{eqn-model-imple}
    \mathbf Z_{out} = (1-\alpha) \mbox{AN}(\mathbf Z^{(0)}) + \alpha \mbox{GN}(\mathbf Z^{(0)}, \mathbf A),
\end{equation}
where $0\leq \alpha < 1$ again is a hyper-parameter, and $\mbox{AN}$ and $\mbox{GN}$ denote a global attention network and a graph-based propagation network, respectively. Then the node representations $\mathbf Z_{out}$ are fed into an output neural layer for prediction $\hat Y = f_O(\mathbf Z_{out})$, where $f_O$ is a fully-connected layer in our implementation. We next delve into the detailed instantiations of $\mbox{AN}$ and $\mbox{GN}$.

\textbf{Simple Global Attention Network.} %The global interactions in our model are captured by an all-pair attention unit that computes pair-wise influence between arbitrary node pairs. Unlike other Transformers, which often require multiple attention layers for desired capacity, we found that a single-layer global attention is sufficient. This is because through one-layer propagation over a densely connected attention graph, the information of each node can be adaptively propagated to arbitrary nodes within the batch. Therefore, despite its simplicity, our model has sufficient expressivity to capture implicit dependencies among arbitrary node pairs while significantly reducing computational overhead. 
There exist many potential choices for global attention functions as the instantiation of $\mbox{AN}(\mathbf Z^{(0)})$, e.g., the widely adopted Softmax attention that is originally used by \cite{transformer}. While the Softmax attention possesses provable expressivity~\cite{tmlr}, it requires $O(N^2)$ complexity for computing the all-pair attentions and updating the representations of $N$ nodes. This computational bottleneck hinders its scalability for large graphs. Alternatively, we introduce a simple attention function which can reduce the computational complexity to $O(N)$ and still accommodate all-pair interactions. Specifically, with the initial embeddings $\mathbf Z^{(0)}$ as input, we first use feature transformations $f_Q$, $f_K$ and $f_V$ to obtain the key, query and value matrices, respectively, as is done by common Transformers:
\begin{equation}
    \mathbf Q = f_Q(\mathbf Z^{(0)}), \quad \mathbf K = f_K(\mathbf Z^{(0)}), \quad \mathbf V = f_V(\mathbf Z^{(0)}), \\
\end{equation}
where $f_Q$, $f_K$ and $f_V$ are instantiated as a fully-connected layer in our implementation. Then we consider the attention function that computes the all-pair similarities:
\begin{equation}\label{eqn-new-attention}
    \overline{\mathbf C} = \mathbf I + \frac{1}{N} \left (  \frac{\mathbf Q}{\|\mathbf Q\|_{\mathcal F}} \right )\cdot \left ( \frac{\mathbf K}{\|\mathbf K\|_{\mathcal F}} \right )^\top,
\end{equation}
\begin{equation}\label{eqn-normalization}
    \mathbf C = \mbox{diag}^{-1}\left ( \overline{\mathbf C} \mathbf 1 \right ) \cdot \overline{\mathbf C},
\end{equation}
where $\mathbf 1$ is an $N$-dimensional all-one column vector. In Eqn.~\ref{eqn-new-attention} the scaling factor $\frac{1}{N}$ can improve the numerical stability and the addition of a self-loop can help to strengthen the role of central nodes. Eqn.~\ref{eqn-normalization} serves as row-normalization which is commonly used in existing attention designs. If one uses the attention matrix $\mathbf C$ to compute the updated embeddings, i.e., $\mathbf Z_{AN} = \mathbf C \mathbf V$, the computation requires $O(N^2)$ complexity, since the computation of the attention matrix (Eqn.~\ref{eqn-new-attention}) and the updated embeddings both needs the cost of $O(N^2)$. Notably, the simple attention function allows an alternative way for computing the updated embeddings via changing the order of matrix products. In specific, assume $\tilde{\mathbf Q} = \frac{\mathbf Q}{\|\mathbf Q\|_{\mathcal F}}$ and $\tilde{\mathbf K} = \frac{\mathbf K}{\|\mathbf K\|_{\mathcal F}}$, and we can rewrite the computation flow of the attention-based propagation:
\begin{equation}
     \mathbf N = \mathrm{diag}^{-1} \left(\mathbf{I} + \frac{1}{N} \tilde{\mathbf Q} (\tilde{\mathbf K}^\top \mathbf 1)  \right),
\end{equation}
\begin{equation}\label{eqn-attn-matrix-our}
    \mathbf Z_{AN} = \mathbf N \cdot \left[ \mathbf{V}+ \frac{1}{N} \tilde{\mathbf Q}( \tilde{\mathbf K}^\top \mathbf V) \right].
\end{equation}
One can verify through basic linear algebra that the result of Eqn.~\ref{eqn-attn-matrix-our} is equivalent to the one obtained by $\mathbf Z_{AN} = \mathbf C\mathbf V$ which explicitly computes the all-pair attention. In other words, while the computation flow of Eqn.~\ref{eqn-attn-matrix-our} does not compute the all-pair attention matrix, it still accommodates the all-pair interactions as the original attention. More importantly, the computation of Eqn.~\eqref{eqn-attn-matrix-our} can be achieved in $O(N)$ complexity, which is much more efficient than using the original computation flow. Therefore, such a simple global attention design reduces the quadratic complexity to $O(N)$ and in the meanwhile guarantee the expressivity for capturing all-pair interactions. 

\textbf{Graph-based Propagation Network.} For accommodating the prior information of the input graph $\mathcal G$, existing models tend to use positional encodings~\cite{graphgps}, edge regularization loss~\cite{nodeformer} or augmenting the Transformer layers with GNNs~\cite{graphtrans-neurips21}.  %such as positional encodings~\cite{}, graph regularization loss~\cite{}, edge embeddings~\cite{} or combining attention with GNN layers~\cite{}. In a general sense, the last strategy can be the most straightforward one that does not require any tricky design or complicate the model architectures. 
Here we resort to a simple-yet-effective scheme and implement $\mbox{GN}(\mathbf Z^{(0)}, \mathbf A)$ (in Eqn.~\ref{eqn-model-imple}) through a simple graph neural network (e.g., GCN~\cite{GCN-vallina}) that possesses good scalability for large graphs. We note that while the theoretical results in Sec.~\ref{sec-theory} suggest that using one-layer propagation can in principle achieve equivalent expressiveness as multi-layer propagation, it does not necessarily mean in practice the model has to be constrained to single-layer architectures. The main advantage of reducing the propagation layers lies in the improvement in computational efficiency, which is already achieved by SGFormer with the adoption of single-layer global attention since the global attention determines the computational overhead of Transformers. In this regard, one can still use shallow layers of GNNs in practice that are desirably efficient and require negligible extra costs. 

\textbf{Complexity Analysis.} Algorithm~\ref{alg:method} presents the feedforward computation and training process of SGFormer. The overall computational complexity of our model is $O(N+E)$, where $E = |\mathcal E|$, as the GN module requires $O(E)$. Due to the typical sparsity of graphs (i.e., $E\ll N^2$), our model can scale linearly w.r.t. graph sizes. Furthermore, with only single-layer global attention and simple GNN architectures, our model is fairly lightweight, enabling efficient training and inference.

\textbf{Scaling to Larger Graphs.} For larger graphs that even GCN cannot be trained on using full-batch processing with a single GPU, we can use the random mini-batch partitioning method utilized by \cite{nodeformer}, which we found works well and efficiently in practice. Specifically, we randomly shuffle all the nodes and partition the nodes into mini-batches with the size $B$. Then in each iteration, we feed one mini-batch (the input graph among these $B$ nodes are directly extracted by the subgraph of the original graph) into the model for loss computation on the training nodes within this mini-batch. This scheme incurs negligible additional costs during training and allows the model to scale to arbitrarily large graphs. Moreover, owing to the linear complexity w.r.t. node numbers required by SGFormer, we can employ large batch sizes (e.g., $B=0.1M$), which facilitate the model in capturing informative global interactions among nodes within each mini-batch. Apart from this simple scheme, our model is also compatible with other techniques such as neighbor sampling~\cite{graphsaint}, graph clustering~\cite{clustergcn-kdd19}, and historical embeddings~\cite{gnnautoscale}. These techniques may require extra time costs for training, and we leave exploration along this orthogonal direction for future works. Fig.~\ref{fig:model} presents the data flow of the proposed model.

\begin{algorithm}[t]
\caption{Feed-forward and Training of SGFormer.}
  \label{alg:method}
  % \small
\begin{algorithmic}[1]
  \STATE {\bfseries Input:} Node feature matrix $\mathbf{X}$, input graph adjacency matrix $\mathbf A$, labels of training nodes $Y_{tr}$, weight on graph-based propagation $\alpha$.
  \WHILE{not reaching the budget of training epochs}
  \STATE Encode input node features $\mathbf Z^{(0)} = f_I(\mathbf X)$;\\
  \STATE Compute query, key and value matrices $\mathbf Q = f_Q(\mathbf Z^{(0)})$, $\mathbf K=f_K(\mathbf Z^{(0)})$ and $\mathbf V = f_V(\mathbf Z^{(0)})$; \\
\STATE Compute normalization $\tilde{\mathbf Q} = \frac{\mathbf Q}{\|\mathbf Q\|_{\mathcal F}}$ and $\tilde{\mathbf K} = \frac{\mathbf K}{\|\mathbf K\|_{\mathcal F}}$; \\
\STATE Compute denominator of global attention $\mathbf N = \mathrm{diag}^{-1} \left(\mathbf{I} + \frac{1}{N} \tilde{\mathbf Q} (\tilde{\mathbf K}^\top \mathbf 1)  \right)$; \\
\STATE Compute updated embeddings by global attention $\mathbf Z_{AN} = \mathbf N \cdot \left[ \mathbf{V}+ \frac{1}{N} \tilde{\mathbf Q}( \tilde{\mathbf K}^\top \mathbf V) \right]$; \\
\STATE Compute final representations by graph-based propagation $\mathbf Z_{out} = (1-\alpha) \mathbf Z^{AN} + \alpha \mbox{GN}(\mathbf Z^{(0)}, \mathbf A)$; \\
\STATE Calculate predicted labels $\hat{Y} = f_O(\mathbf Z_{out})$; \\
\STATE Compute the supervised loss $\mathcal L$ from $\hat Y_{tr}$ and $Y_{tr}$; \\
\STATE Use $\mathcal L$ to update the trainable parameters;
    \ENDWHILE
\end{algorithmic}
\end{algorithm}

\subsection{Comparison with Existing Models}\label{sec-model-comp}

We next provide a more in-depth discussion comparing our model with prior art and illuminating its potential in wide application scenarios. Table~\ref{tab:comparison} presents a head-to-head comparison of current graph Transformers in terms of their architectures, expressivity, and scalability. Most existing Transformers have been developed and optimized for graph classification tasks on small graphs, while some recent works have focused on Transformers for node classification, where the challenge of scalability arises due to large graph sizes. 

\begin{table*}[t]
	\centering
 \caption{Comparison of (typical) graph Transformers w.r.t. required components (positional encodings, multi-layer attentions, augmented loss functions and edge embeddings), all-pair expressivity and algorithmic complexity w.r.t. node number $N$ and edge number $E$ (often $E\ll N^2$). The largest demonstration means the largest graph size used by the papers. ($m:$ number of sampled nodes, $s:$ number of augmentations, $h:$ number of hops.)\label{tab:comparison}}
 % \vspace{-5pt}
	\resizebox{\textwidth}{!}{
		\setlength{\tabcolsep}{3mm}{
			\begin{tabular}{c | cccc | c | cc | c }
				\toprule
				\multirow{2}{*}{\textbf{Model}}  &   \multicolumn{4}{c|}{\textbf{Model Components}} & \textbf{All-pair} & \multicolumn{2}{c|}{\textbf{Algorithmic Complexity}} & \textbf{Largest} \\
                & \textbf{Pos Enc} & \textbf{Multi-Layer Attn} & \textbf{Aug Loss} & \textbf{Edge Emb} & \textbf{Expressivity} & \textbf{Pre-processing}        & \textbf{Training} & \textbf{Demo.}                          \\
				\midrule
    GraphTransformer~\cite{graphtransformer-2020} & R & R & - & R  & Yes & $ O(N^3) $        & $ O(N^2) $ & 0.2K \\
    Graphormer~\cite{graphformer-neurips21}    & R & R & - & R & Yes & $ O(N^3)   $      & $ O(N^2)  $   & 0.3K      \\
				GraphTrans~\cite{graphtrans-neurips21}    & - & R & - & - & Yes & -                              & $ O(N^2)   $    & 0.3K                  \\
				% GraphiT~\cite{graphit-2021}  &  &       &$  O(N^3) $ & $ O(N^2 d) $                                                 & $ O(N^2+Nd) $   \\
                SAT~\cite{SAT-icml22}  & R & R & -  &  - & Yes &$  O(N^3) $ & $ O(N^2) $  & 0.2K                             \\
                EGT~\cite{EGT}   & R & R & R  &  R  & Yes & $ O(N^3) $        & $ O(N^2)  $  & 0.5K  \\
                GraphGPS~\cite{graphgps} & R & R & - & R & Yes & $O(N^3)$ & $O(N + E)$ & 1.0K \\
                \midrule
                Graph-Bert~\cite{graphbert-2020} & R & R & R & R & Yes & $ O(N^2)   $      & $ O(N^2) $ &  20K  \\
                Gophormer~\cite{gophormer}  & R & R & R & - & No & - & $O(Nsm^2)$ & 20K \\
                NodeFormer~\cite{nodeformer}  & R & R & R & - & Yes & - & $ O(N+E) $ & 2.0M  \\
                ANS-GT~\cite{ANS-GT} & R & R & - & - & No & - & $O(Nsm^2)$ & 20K \\NAGphormer~\cite{chen2023nagphormer} & R & R & - & R & No & $O(N^3)$ & $O(Nh^2)$ & 2.0M \\DIFFormer~\cite{wu2023difformer}  & - & R & - & - & Yes & - & $ O(N+E) $ & 1.6M  \\
				\textbf{SGFormer} (ours)  & - & - & - & - & Yes & - & $ O(N+E) $ & 0.1B \\
				\bottomrule
			\end{tabular}
	}
 }
 % \vspace{-5pt}
\end{table*}

%\textbf{Efficient Global Attention.} There are quite a few recent works on efficient Transformers that reduces the complexity of attention computation to $O(N)$ via, e.g., random feature map~\cite{performer-iclr21}, low-rank approximation~\cite{linformer} or local-global attention~\cite{longformer}. The first two approaches resort to approximation techniques for the global attention, while the third approach only attends on a small subset of tokens at each layer. In contrast with them, SGFormer guarantees the exact computation for all-pair attention without using any approximation and still achieves linear complexity. This is enabled by resorting to a new attention function \eqref{eqn-attn-ours}.

$\bullet$\; \textbf{Architectures.} Regarding model architectures, some existing models incorporate edge/positional embeddings (e.g., Laplacian decomposition features~\cite{graphtransformer-2020}, degree centrality~\cite{graphformer-neurips21}, Weisfeiler-Lehman labeling~\cite{graphbert-2020}) or utilize augmented training loss (e.g., edge regularization~\cite{EGT,nodeformer}) to capture graph information. However, the positional embeddings require an additional pre-processing procedure with a complexity of up to $O(N^3)$, which can be time- and memory-consuming for large graphs, while the augmented loss may complicate the optimization process. Moreover, existing models typically adopt a default design of stacking deep multi-head attention layers for competitive performance. In contrast, SGFormer does not require any of positional embeddings, augmented loss or pre-processing, and only uses a single-layer, single-head global attention, making it both efficient and lightweight.

$\bullet$\; \textbf{Expressivity.} There are some recently proposed graph Transformers for large graphs~\cite{gophormer,chen2023nagphormer,ANS-GT} that limit the attention computation to a subset of nodes, such as neighboring nodes or sampled nodes from the graph. This approach allows linear scaling w.r.t. graph sizes, but sacrifices the expressivity for accommodating all-pair interactions. In contrast, SGFormer maintains attention computation over all $N$ nodes in each layer while still achieving $O(N)$ complexity. Moreover, unlike NodeFormer~\cite{nodeformer} and GraphGPS~\cite{graphgps} which rely on random feature maps as approximation, SGFormer does not require any approximation or stochastic components and is more stable during training.

$\bullet$\; \textbf{Scalability.} In terms of algorithmic complexity, most existing graph Transformers have $O(N^2)$ complexity due to global all-pair attention, which is a critical computational bottleneck that hinders their scalability even for medium-sized graphs with thousands of nodes. While neighbor sampling can serve as a plausible remedy, it often sacrifices performance due to the significantly reduced receptive field~\cite{survey-graphtransformer}. SGFormer scales linearly w.r.t. $N$ and supports full-batch training on large graphs with up to 0.1M nodes. For further larger graphs, SGFormer is compatible with mini-batch training using large batch sizes, which allows the model to capture informative global information while having a negligible impact on performance. Notably, due to the linear complexity and simple architecture, SGFormer can scale to the web-scale graph \textsc{ogbn-papers100M} (with 0.1B nodes) when trained on a single GPU, two orders-of-magnitude larger than the largest demonstration among most graph Transformers.

\section{Empirical Evaluation}\label{sec-exp}
We apply SGFormer to real-world graph datasets whose predictive tasks can be modeled as node-level prediction. The latter is commonly used for effectiveness evaluation of learning graph representations and scalability to large graphs. We present the details of implementation and datasets in Sec.~\ref{sec-exp-detail}. Then in Sec.~\ref{sec-exp-comp-m}, we test SGFormer on medium-sized graphs (from 2K to 30K nodes) and compare it with an extensive set of expressive GNNs and Transformers. In Sec.~\ref{sec-exp-comp-l}, we scale SGFormer to large-sized graphs (from 0.1M to 0.1B nodes) where its superiority is demonstrated over scalable GNNs and Transformers. Later in Sec.~\ref{sec-exp-ratio}, we further compare the performance with different ratios of labeled data. In addition, we compare the model's time and space efficiency and scalability in Sec.~\ref{sec-exp-effi}. In Sec.~\ref{sec-exp-ablation}, we analyze the impact of several key components in our model. Sec.~\ref{sec-exp-dis} provides further discussions on how the single-layer model performs compared with the multi-layer counterpart.

\begin{table*}[tb!]
\centering
\caption{Mean and standard deviation of testing scores on medium-sized graph benchmarks. We annotate the node and edge number of each dataset and OOM indicates out-of-memory when training on a GPU with 24GB memory. We mark the model ranked in the \color{color1}{\textbf{first}}/\color{color2}{\textbf{second}}/\color{color3}{\textbf{third}} \color{black}place.}
% \vspace{-5pt}
\label{tab:main-res-m}
\resizebox{0.8\textwidth}{!}{
\begin{tabular}{l|c|c|c|c|c|c|c}
\toprule
\textbf{Dataset} & \textsc{cora} & \textsc{citeseer} & \textsc{pubmed} & \textsc{actor} & \textsc{squirrel} & \textsc{chameleon} & \textsc{deezer} \\
\midrule
\textbf{\# Nodes} & 2,708 & 3,327 & 19,717 & 7,600 & 2223 & 890 & 28,281 \\
\textbf{\# Edges} & 5,278 & 4,552 & 44,324 & 29,926 & 46,998 & 8,854 & 92,752 \\
% \textbf{Metric} & Accuracy & Accuracy & Accuracy & Accuracy & Accuracy & Accuracy & Accuracy \\
\midrule
GCN & 81.6 ± 0.4 & 71.6 ± 0.4 & 78.8 ± 0.6 & 30.1 ± 0.2 & 38.6 ± 1.8 & 41.3 ± 3.0 & 62.7 ± 0.7 \\
GAT & 83.0 ± 0.7 & 72.1 ± 1.1 & 79.0 ± 0.4 & 29.8 ± 0.6 & 35.6 ± 2.1 & 39.2 ± 3.1 & 61.7 ± 0.8 \\
SGC & 80.1 ± 0.2 & 71.9 ± 0.1 & 78.7 ± 0.1 & 27.0 ± 0.9 & 39.3 ± 2.3 & 39.0 ± 3.3 & 62.3 ± 0.4 \\
JKNet & 81.8 ± 0.5 & 70.7 ± 0.7 & 78.8 ± 0.7 & 30.8 ± 0.7 & 39.4 ± 1.6 & 39.4 ± 3.8 & 61.5 ± 0.4 \\
APPNP & \color{color3}\textbf{83.3 ± 0.5} & 71.8 ± 0.5 & \color{color3}\textbf{80.1 ± 0.2} & 31.3 ± 1.5 & 35.3 ± 1.9 & 38.4 ± 3.5 & 66.1 ± 0.6 \\
H2GCN & 82.5 ± 0.8 & 71.4 ± 0.7 & 79.4 ± 0.4 & 34.4 ± 1.7 & 35.1 ± 1.2 & 38.1 ± 4.0 & 66.2 ± 0.8 \\
SIGN & 82.1 ± 0.3 & 72.4 ± 0.8 & 79.5 ± 0.5 & 36.5 ± 1.0 & 40.7 ± 2.5 & 41.7 ± 2.2 & 66.3 ± 0.3 \\
CPGNN & 80.8 ± 0.4 & 71.6 ± 0.4 & 78.5 ± 0.7 & 34.5 ± 0.7 & 38.9 ± 1.2 & 40.8 ± 2.0 & 65.8 ± 0.3 \\
GloGNN & 81.9 ± 0.4 & 72.1 ± 0.6 & 78.9 ± 0.4 & 36.4 ± 1.6 & 35.7 ± 1.3 & 40.2 ± 3.9 & 65.8 ± 0.8 \\
\midrule
Graphormer$_\textsc{Small}$ & OOM & OOM & OOM & OOM & OOM & OOM & OOM \\
Graphormer$_\textsc{Smaller}$ & 75.8 ± 1.1 & 65.6 ± 0.6 & OOM & OOM & 40.9 ± 2.5 & 41.9 ± 2.8 & OOM \\
Graphormer$_\textsc{UltrasSmall}$ & 74.2 ± 0.9 & 63.6 ± 1.0 & OOM & 33.9 ± 1.4 & 39.9 ± 2.4 & 41.3 ± 2.8 & OOM \\
GraphTrans$_\textsc{Small}$ & 80.7 ± 0.9 & 69.5 ± 0.7 & OOM & 32.6 ± 0.7 & 41.0 ± 2.8 & \color{color2}\textbf{42.8 ± 3.3} & OOM \\
GraphTrans$_\textsc{UltrasSmall}$ & 81.7 ± 0.6 & 70.2 ± 0.8 & 77.4 ± 0.5 & 32.1 ± 0.8 & 40.6 ± 2.4 & 42.2 ± 2.9 & OOM \\
NodeFormer & 82.2 ± 0.9 & \color{color3}\textbf{72.5 ± 1.1} & 79.9 ± 1.0 & \color{color3}\textbf{36.9 ± 1.0} & 38.5 ± 1.5 & 34.7 ± 4.1 & 66.4 ± 0.7 \\
GraphGPS & 80.9 ± 1.1 & 68.6 ± 1.5 & 78.5 ± 0.7 & \color{color2}\textbf{37.1 ± 1.5} & \color{color3}\textbf{41.2 ± 2.1} &
42.5 ± 4.0 & \color{color3}\textbf{66.7 ± 0.3} \\ 
ANS-GT & 82.4 ± 0.9 & 70.7 ± 0.6 & 79.6 ± 1.0 & 35.8 ± 1.4 & 40.7 ± 1.4 & \color{color3}\textbf{42.6 ± 2.8} & 66.5 ± 0.7 \\
DIFFormer & \color{color1}\textbf{85.9 ± 0.4} & \color{color1}\textbf{73.5 ± 0.3} & \color{color1}\textbf{81.8 ± 0.3} & 36.5 ± 0.7 & \color{color2}\textbf{41.6 ± 2.5} & 42.5 ± 2.5 & \color{color2}\textbf{66.9 ± 0.7} \\
\textbf{SGFormer} & \color{color2}\textbf{84.5 ± 0.8} & \color{color2}\textbf{72.6 ± 0.2} & \color{color2}\textbf{80.3 ± 0.6} & \color{color1}\textbf{37.9 ± 1.1} & \color{color1}\textbf{41.8 ± 2.2} & \color{color1}\textbf{44.9 ± 3.9} & \color{color1}\textbf{67.1 ± 1.1}\\
\bottomrule
\end{tabular}
}
% \vspace{-5pt}
\end{table*}

\subsection{Experiment Details}\label{sec-exp-detail}

\textbf{Datasets.} We evaluate the model on 12 real-world datasets with diverse properties. Their sizes, as measured by the number of nodes in the graph, range from thousand-level to billion-level. We use 0.1M as the threshold and group these datasets into medium-sized datasets (with less than 0.1M nodes) and large-sized datasets (with more than 0.1M nodes). The medium-sized datasets include three citation networks \textsc{cora}, \textsc{citeseer} and \textsc{pubmed}~\cite{Sen08collectiveclassification}, where the graphs have high homophily ratios, and four heterophilic graphs \textsc{actor}~\cite{actor-kdd09}, \textsc{squirrel}, \textsc{chameleon}~\cite{rozemberczki2021multiscale} and \textsc{deezer-europe}~\cite{deezer-2020}, where neighboring nodes tend to have distinct labels. These graphs have 2K-30K nodes and the detailed statistics are reported in Table~\ref{tab:main-res-m}. The large-sized datasets include the citation networks \textsc{ogbn-arxiv} and \textsc{pgbn-papers100M}, the protein interaction network \textsc{ogbn-proteins}~\cite{ogb-nips20}, the item co-occurrence network \textsc{Amazon2M}~\cite{amazoncopurchase-kdd15}, and the social network \textsc{pokec}~\cite{leskovec2016snap}. In particular, \textsc{Amazon2M} entails long-range dependency and \textsc{pokec} is a heterophilic graph. The detailed statistics are presented in Table~\ref{tab:main-res-l} and the largest dataset \textsc{ogbn-papers100M} contains more than 0.1B nodes.

\textbf{Implementation.} The input layer $f_I$ is instantiated as a fully-connected layer with ReLU activation. The output layer $f_O$ is instantiated as a fully-connected layer (with Softmax for classification). The GN module is basically implemented as a GCN~\cite{GCN-vallina} with shallow (e.g., 1-3) propagation layers. We use different training schemes for graph datasets of different scales. For medium-sized graphs, we use full-graph training: the whole graph dataset is fed into the model during training and inference. For large-sized graphs, we adopt the mini-batch training as introduced in Sec.~\ref{sec-model-design}. In specific, we set the batch size as 10K, 0.1M, 0.1M and 0.4M for \textsc{ogbn-proteins}, \textsc{Amazon2M}, \textsc{pokec} and \textsc{ogbn-papers100M}, respectively. Then for inference on these large-sized graphs, following the pipeline used by \cite{ogb-nips20}, we feed the whole graph into the model using CPU, which allows computing the all-pair attention among all the nodes in the dataset. Particularly, for the gigantic graph \textsc{ogbn-papers100M} that cannot be fed as whole into common CPU with moderate memory, we adopt the mini-batch partition strategy used in training to reduce the overhead. For hyper-parameter settings, we use the model performance on the validation set as the reference. Unless otherwise stated, the hyper-parameters are selected using grid search with the searching space: learning rate within $\{0.001, 0.005, 0.01, 0.05, 0.1\}$, weight decay within $\{1e-5, 1e-4, 5e-4, 1e-3, 1e-2\}$, hidden size within $\{32, 64, 128, 256\}$, dropout ratio within $\{0, 0.2, 0.3, 0.5\}$, weight $\alpha$ within $ \{0.5, 0.8\} $.

\textbf{Evaluation Protocol.} We follow the common practice and set a fixed number of training epochs: 300 for medium-sized graphs, 1000 for large-sized graphs, and 50 for the extremely large graph \textsc{ogbn-papers100M}. We use ROC-AUC as the evaluation metric for \textsc{ogbn-proteins} and Accuracy for other datasets. The testing score achieved by the model that reports the highest score on the validation set is used for evaluation. We run each experiment with five independent trials using different initializations, and report the mean and variance of the metrics for comparison.

\subsection{Comparative Results on Medium-sized Graphs}\label{sec-exp-comp-m}

\textbf{Setup.} We first evaluate the model on medium-sized datasets. For citation networks \textsc{cora}, \textsc{citeseer} and \textsc{pubmed}, we follow the commonly used benchmark setting, i.e., semi-supervised data splits adopted by \cite{GCN-vallina}. For \textsc{actor} and \textsc{deezer-europe}, we use the random splits of the benchmark setting introduced by \cite{newbench}. For \textsc{squirrel} and \textsc{chameleon}, we use the splits proposed by a recent evaluation paper~\cite{platonov2023a} that filters the overlapped nodes in the original datasets. 

\textbf{Competitors.} Given the moderate sizes of graphs where most of existing models can scale smoothly, we compare with multiple sets of competitors from various aspects. Basically, we adopt standard GNNs including GCN~\cite{GCN-vallina}, GAT~\cite{GAT} and SGC~\cite{SGC-icml19} as baselines. Besides, we compare with advanced GNN models, including JKNet~\cite{jknet-icml18}, APPNP~\cite{appnp}, SIGN~\cite{sign-2020}, H2GCN~\cite{h2gcn-neurips20}, CPGNN~\cite{zhu2021graph} and GloGNN~\cite{glognn}. In terms of Transformers, we mainly compare with the state-of-the-art scalable graph Transformers NodeFormer~\cite{nodeformer}, GraphGPS~\cite{graphgps}, ANS-GT~\cite{ANS-GT} and DIFFormer~\cite{wu2023difformer}. Furthermore, we adapt two powerful Transformers tailored for graph-level tasks, i.e., Graphormer~\cite{graphformer-neurips21} and GraphTrans~\cite{graphtrans-neurips21}, for comparison.
In particular, since the original implementations of these models are of large sizes and are difficult to scale on all node-level prediction datasets considered in this paper, we adopt their smaller versions for experiments. We use Graphormer$ _\textsc{small} $ (6 layers and 32 heads), Graphormer$ _\textsc{smaller} $ (3 layers and 8 heads) and Graphormer$ _\textsc{UltraSmall} $ (2 layers and 1 head). As for GraphTrans, we use GraphTrans$ _\textsc{small} $(3 layers and 4 heads) and GraphTrans$ _\textsc{UltraSmall} $ (2 layers and 1 head).

\textbf{Results.} Table~\ref{tab:main-res-m} reports the results of all the models. We found that SGFormer significantly outperforms three standard GNNs (GCN, GAT and SGC) by a large margin, with up to $25.9\%$ impv. over GCN on \textsc{actor}, which suggests that our single-layer global attention model is indeed effective despite its simplicity. Moreover, we observe that the relative improvements of SGFormer over three standard GNNs are overall more significant on heterophilic graphs \textsc{actor}, \textsc{squirrel}, \textsc{chameleon} and \textsc{deezer}. The possible reason is that in such cases the global attention could help to filter out spurious edges from neighboring nodes of different classes and accommodate dis-connected yet informative nodes in the graph. Compared to other advanced GNNs and graph Transformers (NodeFormer, ANS-GT and GraphGPS), the performance of SGFormer is highly competitive and even superior with significant gains over the runner-ups in most cases. These results serve as concrete evidence for verifying the efficacy of SGFormer as a powerful learner for node-level prediction. We also found that both Graphormer and GraphTrans suffer from serious over-fitting, due to their relatively complex architectures and limited ratios of labeled nodes. In contrast, the simple and lightweight architecture of SGFormer leads to its better generalization ability given the limited supervision in these datasets.

\begin{table*}[tb!]
\centering
\caption{Testing results on large-sized graph benchmarks. OOT indicates that the training cannot be finished within an acceptable time budget.}
\vspace{-5pt}
\label{tab:main-res-l}
\resizebox{0.75\linewidth}{!}{
\begin{tabular}{l|c|c|c|c|c}
\toprule
\textbf{Dataset} & \textsc{ogbn-proteins} & \textsc{Amazon2M} & \textsc{pokec} & \textsc{ogbn-arxiv} & \textsc{ogbn-papers100M} \\
\midrule
\textbf{\# Nodes} & 132,534 & 2,449,029 & 1,632,803 & 169,343 & 111,059,956 \\
\textbf{\# Edges} & 39,561,252 & 61,859,140 & 30,622,564 & 1,166,243 & 1,615,685,872 \\
% \textbf{Metric} & ROC-AUC & Accuracy & Accuracy & Accuracy & Accuracy \\
\midrule
MLP & 72.04 ± 0.48 & 63.46 ± 0.10 & 60.15 ± 0.03 & 55.50 ± 0.23 & 47.24 ± 0.31 \\
GCN & 72.51 ± 0.35 & 83.90 ± 0.10 & 62.31 ± 1.13 & \color{color2}\textbf{71.74 ± 0.29} & OOM \\
SGC & 70.31 ± 0.23 & 81.21 ± 0.12 & 52.03 ± 0.84 & 67.79 ± 0.27 & 63.29 ± 0.19 \\
GCN-NSampler & 73.51 ± 1.31 & 83.84 ± 0.42 & 63.75 ± 0.77 & 68.50 ± 0.23 & 62.04 ± 0.27 \\
GAT-NSampler & 74.63 ± 1.24 & 85.17 ± 0.32 & 62.32 ± 0.65 & 67.63 ± 0.23 & \color{color3}\textbf{63.47 ± 0.39} \\
SIGN & 71.24 ± 0.46 & 80.98 ± 0.31 & 68.01 ± 0.25 & \color{color3}\textbf{70.28 ± 0.25} & \color{color2}\textbf{65.11 ± 0.14} \\
NodeFormer & \color{color3}\textbf{77.45 ± 1.15} & \color{color2}\textbf{87.85 ± 0.24} & \color{color2}\textbf{70.32 ± 0.45} & 59.90 ± 0.42 & OOT  \\
DIFFormer & \color{color2}\textbf{79.49 ± 0.44} & \color{color3}\textbf{85.21 ± 0.62}
 & \color{color3}\textbf{69.24 ± 0.76} & 68.52 ± 0.49 & OOT \\
\textbf{SGFormer} & \color{color1}\textbf{79.53 ± 0.38} & \color{color1}\textbf{89.09 ± 0.10} & \color{color1}\textbf{73.76 ± 0.24} & \color{color1}\textbf{72.63 ± 0.13} & \color{color1}\textbf{66.01 ± 0.37}\\
\bottomrule
\end{tabular}
}
% \vspace{-5pt}
\end{table*}

\begin{table*}[tb!]
\centering
\caption{Efficiency comparison of SGFormer and graph Transformer competitors w.r.t. training time per epoch, inference time and GPU memory costs on a Tesla T4. We use the small model versions of Graphormer and GraphTrans. The missing results are caused by the out-of-memory issue. %For \textsc{cora} and \textsc{pubmed}, we report the averaged time for computing one full-batch; for \textsc{Amazon2M}, we report the averaged time for computing a mini-batch of 0.1M nodes.
}
\vspace{-5pt}
\label{tbl:efficiency}
% Please add the following required packages to your document preamble:
% \usepackage{multirow}
\resizebox{0.8\linewidth}{!}{
\begin{tabular}{l|ccc|ccc|ccc}
\toprule
\multirow{2}{*}{\textbf{Model}} & \multicolumn{3}{c|}{\textsc{cora}} & \multicolumn{3}{c|}{\textsc{pubmed}} & \multicolumn{3}{c}{\textsc{Amazon2M}} \\
 & \textbf{Tr (ms)} & \textbf{Inf (ms)} & \textbf{Mem (GB)} & \textbf{Tr (ms)} & \textbf{Inf (ms)} & \textbf{Mem (GB)} & \textbf{Tr (ms)} & \textbf{Inf (ms)} & \textbf{Mem (GB)} \\
 \midrule
 Graphormer &563.5 & 537.1 &5.0& - & - & -& - & - & - \\
 GraphTrans &160.4&40.2&3.8& - & - & -& - & - & -\\
    NodeFormer & 68.5 & 30.2 & 1.2 & 321.4 & 135.5 & 2.9 & 5369.5 & 1410.0 & 4.6 \\
    GraphGPS & 60.8 & 26.1 & 1.2 & 423.1 & 217.7 & 1.6 & - & - & - \\
    ANS-GT & 570.1 & 539.2 & 1.0 & 511.9 & 461.0 & 2.1 & - & - & -\\
    DIFFormer & 49.7 & 9.6 & 1.2 & 85.3 & 30.8 & 2.4 & 3683.8 & 523.1 & 4.5 \\
    \textbf{SGFormer} & 15.0 & 3.8 & 0.9 & 15.4 & 4.4 & 1.0 & 2481.4 & 382.5 & 2.7 \\
\bottomrule
\end{tabular}
}
% \vspace{-5pt}
\end{table*}

\begin{figure}[tb!]
  \centering
  \includegraphics[width=\linewidth]{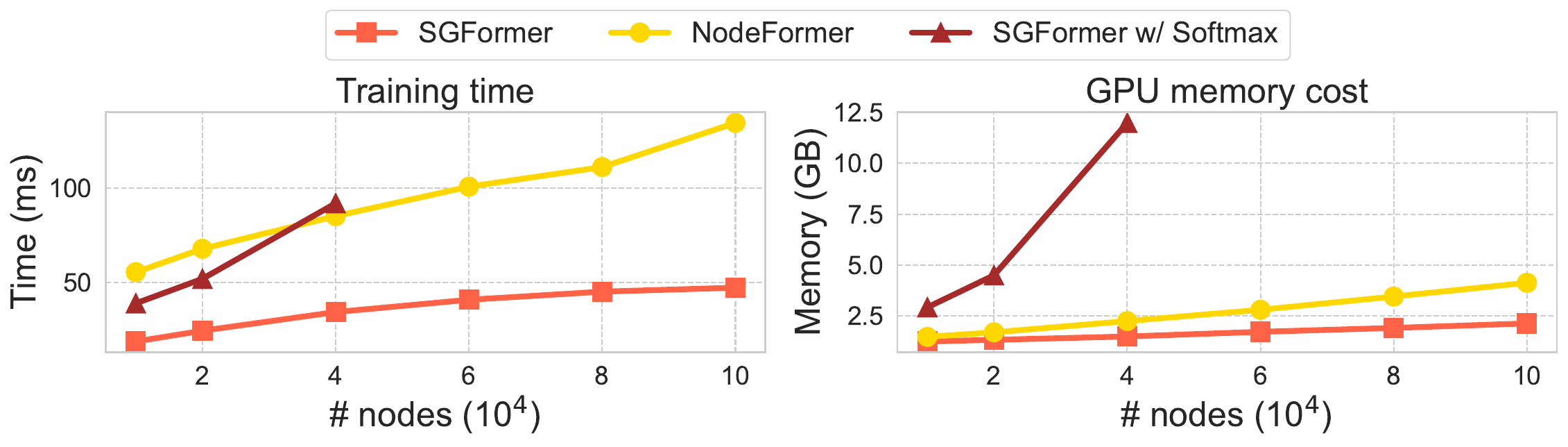}
  % \vspace{-5pt}
  \caption{Scalability test of training time per epoch and GPU memory cost w.r.t. graph sizes (a.k.a. node numbers). NodeFormer reports OOM when \# nodes reaches more than 30K.}
  \label{fig-scalability}
  % \vspace{-5pt}
\end{figure}

\subsection{Comparative Results on Large-sized Graphs}\label{sec-exp-comp-l}

\textbf{Setup.} We further evaluate the model on large-sized graph datasets where the numbers of nodes range from millions to billions. For three OGB datasets \textsc{ogbn-arxiv}, \textsc{ogbn-proteins} and \textsc{ogbn-papers100M}, we use the public splits provided by \cite{ogb-nips20}. Furthermore, we follow the splits used by the recent work \cite{nodeformer} for \textsc{Amazon2M} and adopt random splits with the ratio 1:1:8 for \textsc{pokec}.

\textbf{Competitors.} Due to the large graph sizes, most of the expressive GNNs and Transformers compared in Sec.~\ref{sec-exp-comp-m} are hard to scale within the acceptable budget of time and memory costs. Therefore, we compare with MLP, GCN and two scalable GNNs, i.e., SGC and SIGN. We also compare with GNNs using the neighbor sampling technique~\cite{graphsaint}: GCN-NSampler and GAT-NSampler. Our main competitors are NodeFormer~\cite{nodeformer} and DIFFormer~\cite{wu2023difformer}, the recently proposed scalable graph Transformers with all-pair attention. 

\textbf{Results.} Table~\ref{tab:main-res-l} presents the experimental results. We found that SGFormer yields consistently superior results across five datasets, with significant performance improvements over GNN competitors. This suggests the effectiveness of the global attention that can learn implicit inter-dependencies among a large number of nodes beyond input structures. Furthermore, SGFormer outperforms NodeFormer by a clear margin across all the cases, which demonstrates the superiority of SGFormer that uses simpler architecture and achieves better performance on large graphs. For the largest dataset \textsc{ogbn-papers100M} where prior Transformer models fail to demonstrate, SGFormer scales smoothly with decent efficiency and yields highly competitive results. Specifically, SGFormer reaches the testing accuracy of 66.0 with consumption of about 3.5 hours and 23.0 GB memory on a single GPU for training. This result provides strong evidence that shows the promising power of SGFormer on extremely large graphs, producing superior performance with limited computation budget.

\begin{table*}[tb!]
\centering
\caption{Testing results with different training ratios on \textsc{cora}.}
\vspace{-5pt}
\label{tab:ratio-res-cora}
\resizebox{\textwidth}{!}{
\begin{tabular}{l|c|c|c|c|c|c|c|c|c|c}
\toprule
\textbf{Training Ratio} & \multicolumn{1}{c|}{\textbf{50\%}} & \multicolumn{1}{c|}{\textbf{40\%}} & \multicolumn{1}{c|}{\textbf{30\%}} & \multicolumn{1}{c|}{\textbf{20\%}} & \multicolumn{1}{c|}{\textbf{10\%}} & \multicolumn{1}{c|}{\textbf{5\%}} & \multicolumn{1}{c|}{\textbf{4\%}}& \multicolumn{1}{c|}{\textbf{3\%}} & \multicolumn{1}{c|}{\textbf{2\%}} & \multicolumn{1}{c}{\textbf{1\%}} \\
\midrule
GCN & 87.71 ± 0.56 & 87.26 ± 0.20 & 86.33 ± 0.37 & 85.37 ± 0.42 & 83.06 ± 0.74 & 
\color{color3}\textbf{79.94 ± 1.01} & \color{color3}\textbf{79.36 ± 0.33} & 76.16 ± 0.49 & 67.50 ± 0.23 & 62.50 ± 1.22\\ 
GAT & 87.86 ± 0.32 & \color{color3}\textbf{87.41 ± 0.26} & 86.32 ± 0.33 & 85.53 ± 0.58 & 83.24 ± 0.99 & 
79.84 ± 0.48 & 79.24 ± 0.61 & 75.63 ± 0.51 & 67.16 ± 0.51 & 63.30 ± 0.76\\
SGC & 86.50 ± 0.13 & 84.56 ± 0.52 & 83.04 ± 0.57 & 80.43 ± 0.14 & 77.51 ± 0.21 & 
72.49 ± 0.24 & 71.00 ± 0.80 & 67.08 ± 0.27 & 60.39 ± 0.31 & 52.87 ± 0.22\\
JKNet & 87.42 ± 0.69 & 86.18 ± 0.58 & 85.53 ± 0.45 & 82.85 ± 0.49 & 79.92 ± 0.24 & 
75.33 ± 0.76 & 74.77 ± 0.87 & 71.55 ± 1.09 & 62.69 ± 0.44 & 56.09 ± 1.50\\
APPNP & \color{color3}\textbf{88.51 ± 0.19} & 88.21 ± 0.46 & \color{color2}\textbf{87.40 ± 0.11} & \color{color3}\textbf{86.08 ± 0.25} & \color{color2}\textbf{84.55 ± 0.36} & 
\color{color2}\textbf{81.55 ± 0.33} & \color{color2}\textbf{80.53 ± 0.34} & \color{color2}\textbf{77.99 ± 0.26} & \color{color3}\textbf{68.65 ± 0.15} & \color{color3}\textbf{64.55 ± 0.93}\\
H2GCN & \color{color2}\textbf{88.63 ± 0.28} & \color{color2}\textbf{87.64 ± 0.58} & \color{color3}\textbf{87.28 ± 0.23} & \color{color2}\textbf{86.19 ± 0.27} & \color{color3}\textbf{83.35 ± 0.49} & 
79.47 ± 0.79 & 79.29 ± 0.36 & \color{color3}\textbf{77.16 ± 0.59} & 67.01 ± 1.15 & 64.04 ± 0.67\\
SIGN & 85.29 ± 0.08 & 84.03 ± 0.28 & 82.85 ± 0.29 & 80.81 ± 1.24 & 76.73 ± 1.01 & 
71.38 ± 0.49 & 67.99 ± 1.10 & 64.35 ± 0.98 & 55.95 ± 0.39 & 50.20 ± 0.68\\
\midrule
Graphormer & 75.07 ± 0.80 & 74.24 ± 0.95 & 71.60 ± 0.97 & 68.87 ± 0.23 & 62.23 ± 0.99 & 
54.03 ± 0.73 & 52.77 ± 0.43 & 49.15 ± 1.36 & 43.15 ± 0.64 & 39.06 ± 1.55\\
GraphTrans & 87.62 ± 0.40 & 87.22 ± 0.62 & 86.25 ± 0.48 & 84.34 ± 0.81 & 81.64 ± 0.75 & 
79.00 ± 0.34 & 78.48 ± 1.23 & 75.12 ± 1.02 & 67.29 ± 0.80 & 63.68 ± 0.98\\
NodeFormer & 87.36 ± 0.81 & 86.41 ± 0.82 & 84.89 ± 1.15 & 82.81 ± 1.10 & 77.71 ± 1.91 & 
67.62 ± 0.63 & 67.84 ± 0.63 & 65.85 ± 0.92 & 58.12 ± 0.20 & 53.47 ± 1.90\\
GraphGPS & 86.97 ± 0.59 & 85.55 ± 0.33 & 85.56 ± 0.55 & 81.81 ± 0.53 & 79.25 ± 0.94 & 
75.76 ± 0.98 & 75.46 ± 1.55 & 71.58 ± 2.16 & 63.36 ± 1.92 & 55.06 ± 1.65\\
ANS-GT & 83.79 ± 0.27 & 83.53 ± 0.64 & 83.31 ± 0.34 & 81.82 ± 0.81 & 80.64 ± 0.11 & 79.87 ± 0.57 & 77.96 ± 0.63 & 74.74 ± 0.47 & \color{color2}\textbf{69.63 ± 0.24} & \color{color2}\textbf{64.72 ± 0.84} \\
DIFFormer & 88.24 ± 0.31 & 86.73 ± 0.40 & 86.58 ± 0.68 & 83.81 ± 0.35 & 80.79 ± 0.72 & 
78.34 ± 0.82 & 78.24 ± 1.38 & 77.01 ± 0.96 & 66.44 ± 1.47 & 63.19 ± 0.32\\
\textbf{SGFormer} & \color{color1}\textbf{89.22 ± 0.18} & \color{color1}\textbf{88.85 ± 0.04} & \color{color1}\textbf{87.85 ± 0.14} & \color{color1}\textbf{87.81 ± 0.04} & \color{color1}\textbf{86.11 ± 0.23} & 
\color{color1}\textbf{82.05 ± 0.43} & \color{color1}\textbf{81.40 ± 0.41} & \color{color1}\textbf{77.39 ± 0.94} & \color{color1}\textbf{69.84 ± 0.33} & \color{color1}\textbf{65.14 ± 0.66}\\
\toprule
\end{tabular}
}
% \vspace{-5pt}
\end{table*}

\begin{table*}[tb!]
\centering
\caption{Testing results with different training ratios on \textsc{pubmed}.}
\vspace{-5pt}
\label{tab:ratio-res-pubmed}
\resizebox{\textwidth}{!}{
\begin{tabular}{l|c|c|c|c|c|c|c|c|c|c}
\toprule
\textbf{Training Ratio} & \multicolumn{1}{c|}{\textbf{50\%}} & \multicolumn{1}{c|}{\textbf{40\%}} & \multicolumn{1}{c|}{\textbf{30\%}} & \multicolumn{1}{c|}{\textbf{20\%}} & \multicolumn{1}{c|}{\textbf{10\%}} & \multicolumn{1}{c|}{\textbf{5\%}} & \multicolumn{1}{c|}{\textbf{4\%}}& \multicolumn{1}{c|}{\textbf{3\%}} & \multicolumn{1}{c|}{\textbf{2\%}} & \multicolumn{1}{c}{\textbf{1\%}} \\
\midrule
GCN & 86.74 ± 0.13 & 86.67 ± 0.07 & 84.59 ± 0.16 & 86.63 ± 0.07 & 85.77 ± 0.06 & 85.63 ± 0.04 & 83.59 ± 0.09 & 84.63 ± 0.03 & 83.21 ± 0.07 & 82.21 ± 0.13 \\
GAT & 88.26 ± 0.09 & 88.10 ± 0.07 & 86.36 ± 0.19 & 86.32 ± 0.12 & 85.63 ± 0.24 & 84.81 ± 0.20 & 83.50 ± 0.21 & 84.22 ± 0.24 & 82.88 ± 0.09 & 82.52 ± 0.29 \\
SGC & 81.99 ± 0.05 & 81.13 ± 0.10 & 80.96 ± 0.10 & 81.11 ± 0.08 & 80.56 ± 0.08 & 79.38 ± 0.06 & 79.83 ± 0.02 & 78.05 ± 0.04 & 79.52 ± 0.05 & 76.29 ± 0.03 \\
JKNet & 89.10 ± 0.31 & 88.39 ± 0.26 & 87.94 ± 0.20 & 87.74 ± 0.29 & 85.97 ± 0.31 & 84.22 ± 0.58 & 83.71 ± 0.50 & 83.03 ± 0.14 & 82.51 ± 0.37 & 81.19 ± 0.38 \\
APPNP & 87.55 ± 0.17 & 87.15 ± 0.08 & 86.73 ± 0.12 & 86.99 ± 0.11 & \color{color3}\textbf{86.26 ± 0.07} & \color{color3}\textbf{85.77 ± 0.08} & \color{color2}\textbf{85.40 ± 0.18} & \color{color2}\textbf{84.92 ± 0.09} & \color{color2}\textbf{84.22 ± 0.10} & \color{color2}\textbf{82.80 ± 0.23} \\
H2GCN & 89.49 ± 0.07 & 88.73 ± 0.04 & 84.28 ± 0.26 & 84.38 ± 0.10 & 83.98 ± 0.18 & 83.73 ± 0.18 & 83.86 ± 0.05 & 83.16 ± 0.09 & 83.52 ± 0.17 & \color{color3}\textbf{82.64 ± 0.07} \\
SIGN & 89.51 ± 0.04 & 88.88 ± 0.09 & 88.42 ± 0.06 & 87.73 ± 0.05 & 85.79 ± 0.14 & 84.13 ± 0.43 & 83.78 ± 0.20 & 82.77 ± 0.11 & 81.49 ± 0.04 & 78.75 ± 1.08 \\
\midrule
Graphormer & OOM & OOM & OOM & OOM & OOM & OOM & OOM & OOM & OOM & OOM \\
GraphTrans & 88.77 ± 0.12 & 88.30 ± 0.14 & 87.78 ± 0.26 & 87.40 ± 0.18 & 85.76 ± 0.16 & 83.63 ± 0.10 & 83.35 ± 0.06 & 83.19 ± 0.27 & 83.06 ± 0.06 & 82.08 ± 0.11 \\
NodeFormer & 88.77 ± 0.21 & 88.67 ± 0.29 & 88.28 ± 0.29 & \color{color3}\textbf{88.14 ± 0.13} & 86.01 ± 0.58 & \color{color2}\textbf{85.83 ± 0.22} & \color{color3}\textbf{85.33 ± 0.08} & \color{color3}\textbf{84.70 ± 0.28} & \color{color3}\textbf{84.11 ± 0.12} & 81.02 ± 1.97 \\
GraphGPS & \color{color3}\textbf{89.97 ± 0.15} & \color{color3}\textbf{89.43 ± 0.21} & \color{color2}\textbf{89.04 ± 0.18} & \color{color2}\textbf{88.24 ± 0.18} & \color{color2}\textbf{86.46 ± 0.16} & 85.14 ± 0.24 & 84.70 ± 0.16 & 83.92 ± 0.16 & 83.28 ± 0.25 & 80.53 ± 0.47 \\
ANS-GT & 86.73 ± 0.40 & 86.54 ± 0.37 & 86.14 ± 0.12 & 85.53 ± 0.19 & 83.67 ± 0.53 & 82.58 ± 0.42 & 82.33 ± 0.45 & 82.15 ± 0.64 & 81.85 ± 0.39 & 80.81 ± 0.74 \\
DIFFormer & \color{color2}\textbf{90.29 ± 0.10} & \color{color2}\textbf{89.66 ± 0.21} & \color{color3}\textbf{88.96 ± 0.16} & 87.73 ± 0.12 & 86.22 ± 0.25 & 85.50 ± 0.12 & 85.18 ± 0.30 & 84.73 ± 0.25 & 83.90 ± 0.43 & 82.55 ± 0.32 \\
\textbf{SGFormer} & \color{color1}\textbf{90.52 ± 0.19} & \color{color1}\textbf{89.90 ± 0.02} & \color{color1}\textbf{89.33 ± 0.09} & \color{color1}\textbf{88.53 ± 0.11} & \color{color1}\textbf{86.89 ± 0.08} & \color{color1}\textbf{86.24 ± 0.08} & \color{color1}\textbf{85.83 ± 0.11} & \color{color1}\textbf{85.01 ± 0.31} & \color{color1}\textbf{84.29 ± 0.13} & \color{color1}\textbf{82.93 ± 0.12} \\
\bottomrule
\end{tabular}
}
% \vspace{-5pt}
\end{table*}

\begin{table*}[tb!]
\centering
\caption{Testing results with different training ratios on \textsc{chameleon}.}
\vspace{-5pt}
\label{tab:ratio-res-chameleon}
\resizebox{\textwidth}{!}{
\begin{tabular}{l|c|c|c|c|c|c|c|c|c|c}
\toprule
\textbf{Training Ratio} & \multicolumn{1}{c|}{\textbf{50\%}} & \multicolumn{1}{c|}{\textbf{40\%}} & \multicolumn{1}{c|}{\textbf{30\%}} & \multicolumn{1}{c|}{\textbf{20\%}} & \multicolumn{1}{c|}{\textbf{10\%}} & \multicolumn{1}{c|}{\textbf{5\%}} & \multicolumn{1}{c|}{\textbf{4\%}}& \multicolumn{1}{c|}{\textbf{3\%}} & \multicolumn{1}{c|}{\textbf{2\%}} & \multicolumn{1}{c}{\textbf{1\%}} \\
\midrule
GCN & \color{color2}\textbf{42.87 ± 3.36} & \color{color2}\textbf{42.15 ± 2.98} & \color{color2}\textbf{41.75 ± 1.80} & 37.27 ± 3.09 & 34.42 ± 4.03 & \color{color2}\textbf{36.52 ± 1.88} & \color{color2}\textbf{36.13 ± 2.88} & \color{color2}\textbf{33.82 ± 5.01} & 31.52 ± 5.79 & \color{color2}\textbf{31.00 ± 5.97} \\
GAT & 38.79 ± 2.74 & 38.94 ± 2.26 & 39.05 ± 2.38 & 36.10 ± 1.81 & 36.13 ± 1.77 & 34.82 ± 1.32 & 30.76 ± 6.27 & \color{color3}\textbf{33.75 ± 4.70} & \color{color3}\textbf{32.27 ± 3.85} & 30.29 ± 4.59 \\
SGC & 35.16 ± 3.24 & 35.48 ± 3.07 & 33.72 ± 4.62 & 32.78 ± 5.38 & 32.69 ± 5.37 & 29.82 ± 6.63 & 27.41 ± 6.40 & 27.74 ± 6.72 & 26.54 ± 5.33 & 26.61 ± 5.71 \\
JKNet & 40.36 ± 2.47 & 39.62 ± 1.81 & 39.53 ± 1.80 & \color{color2}\textbf{38.84 ± 2.41} & \color{color2}\textbf{38.03 ± 2.36} & 34.81 ± 2.00 & 33.06 ± 4.24 & 31.62 ± 3.95 & 31.95 ± 5.33 & \color{color3}\textbf{30.76 ± 5.06} \\
APPNP & 37.13 ± 1.86 & 37.50 ± 2.46 & 37.81 ± 1.85 & 36.33 ± 2.02 & 35.68 ± 2.28 & 34.57 ± 1.23 & 34.36 ± 2.32 & 33.46 ± 4.34 & \color{color2}\textbf{32.70 ± 4.36} & 30.38 ± 5.96 \\
H2GCN & 35.70 ± 3.57 & 36.28 ± 2.51 & 35.51 ± 1.56 & 35.00 ± 1.43 & 33.94 ± 1.57 & 32.87 ± 1.43 & 32.97 ± 2.78 & 31.46 ± 4.30 & 30.28 ± 4.37 & 30.23 ± 4.40 \\
SIGN & 34.17 ± 3.18 & 34.01 ± 3.01 & 33.77 ± 1.97 & 33.59 ± 2.62 & 32.42 ± 2.54 & 30.13 ± 3.48 & 29.04 ± 3.46 & 28.02 ± 4.64 & 28.42 ± 4.23 & 28.05 ± 4.04 \\
\midrule
Graphormer & 27.09 ± 2.47 & 27.24 ± 1.77 & 27.18 ± 1.76 & 26.53 ± 2.91 & 25.85 ± 2.53 & 25.29 ± 3.36 & 23.87 ± 3.06 & 24.38 ± 3.08 & 24.47 ± 3.05 & 25.02 ± 2.71 \\
Graphtrans & 27.09 ± 2.47 & 27.18 ± 1.85 & 26.93 ± 1.99 & 26.69 ± 2.83 & 25.56 ± 2.52 & 25.29 ± 3.36 & 24.55 ± 3.19 & 24.28 ± 3.14 & 24.47 ± 3.05 & 25.02 ± 2.71 \\
NodeFormer & 39.64 ± 3.04 & 38.75 ± 2.84 & 37.58 ± 2.44 & 37.84 ± 2.81 & 35.58 ± 2.10 & 30.77 ± 4.02 & 31.55 ± 5.28 & 27.87 ± 5.37 & 27.88 ± 5.73 & 28.14 ± 4.96 \\
GraphGPS & \color{color3}\textbf{41.57 ± 3.01} & \color{color3}\textbf{41.63 ± 2.82} & \color{color3}\textbf{40.45 ± 1.28} & \color{color3}\textbf{38.47 ± 2.12} & \color{color3}\textbf{37.32 ± 2.03} & \color{color3}\textbf{35.02 ± 5.39} & \color{color3}\textbf{34.80 ± 2.73} & 33.40 ± 5.26 & 32.23 ± 4.72 & 30.18 ± 4.57 \\
ANS-GT & 38.49 ± 1.78 & 37.23 ± 1.92 & 40.09 ± 1.92 & 36.94 ± 2.11 & 35.59 ± 1.57 & 30.94 ± 2.02 & 28.70  ± 1.98 & 28.39 ± 3.42 & 29.28 ± 1.54 & 28.11 ± 1.10 \\
DIFFormer & 39.10 ± 2.97 & 39.04 ± 3.13 & 39.45 ± 1.70 & 38.41 ± 2.79 & 36.01 ± 2.46 & 33.24 ± 1.55 & 34.50 ± 1.35 & 32.77 ± 3.19 & 31.86 ± 3.72 & 30.36 ± 5.17 \\
\textbf{SGFormer} & \color{color1}\textbf{47.98 ± 2.88} & \color{color1}\textbf{43.97 ± 3.07} & \color{color1}\textbf{43.19 ± 2.02} & \color{color1}\textbf{40.94 ± 1.88} & \color{color1}\textbf{39.60 ± 1.49} & \color{color1}\textbf{38.25 ± 2.03} & \color{color1}\textbf{37.82 ± 3.96} & \color{color1}\textbf{35.53 ± 2.23} & \color{color1}\textbf{32.72 ± 5.23} & \color{color1}\textbf{31.62 ± 2.94} \\
\toprule
\end{tabular}
}
% \vspace{-5pt}
\end{table*}

\subsection{Efficiency and Scalability}\label{sec-exp-effi}

We next provide more quantitative comparisons w.r.t. the efficiency and scalability of SGFormer with the graph Transformer competitors in datasets of different scales.

\textbf{Efficiency.} Table~\ref{tbl:efficiency} reports the training time per epoch, inference time and GPU memory costs on \textsc{cora}, \textsc{pubmed} and \textsc{Amazon2M}. Since the common practice for model training in these datasets is to use a fixed number of training epochs, we report the training time per epoch here for comparing the training efficiency. We found that notably, SGFormer is orders-of-magnitude faster than other competitors. Compared to Graphormer and GraphTrans that require quadratic complexity for global attention and are difficult for scaling to graphs with thousands of nodes, SGFormer significantly reduces the memory costs due to the simple global attention of $O(N)$ complexity. In terms of time costs, it yields 38x/141x training/inference speedup over Graphormer on \textsc{cora}, and 20x/30x speedup over NodeFormer on \textsc{pubmed}. This is mainly because Graphormer requires the compututation of quadratic global attention that is time-consuming, and NodeFormer additionally employs the augmented loss and Gumbel tricks. For GPU costs, the memory usage of SGFormer, NodeFormer and ANS-GT on two medium-sized graphs is comparable, while on the large graph \textsc{Amazon2M}, SGFormer consumes much less memory than NodeFormer. On the large graph \textsc{Amazon2M}, where NodeFormer and SGFormer both leverage mini-batch training, SGFormer is 2x and 4x faster than NodeFormer regarding training and inference, respectively.

% \textbf{Memory Costs.} Due to $O(N^2)$ complexity, Graphormer and GraphTrans suffer out-of-memory issue on quite a few node classification datasets in Table~\ref{tab:main_res}. Even using the small or ultra-small version, these Transformer models are difficult to scale on datasets like \textsc{PubMed} and \textsc{Deezer} with more than 10K nodes. As compared in Table~\ref{tab:computation}, the memory cost of SGFormer-GCN is only 5.7\% and 25\% of that cost by Graphormer$ _\textsc{UltraSmall} $ and NodeFormer, respectively.

\textbf{Scalability Test.} We further test the model's scalability w.r.t. the numbers of nodes within one computation batch. We adopt the \textsc{Amazon2M} dataset and randomly sample a subset of nodes with the node number ranging from 10K to 100K. For fair comparison, we use the same hidden size 256 for all the models. In Fig.~\ref{fig-scalability}, we can see that the time and memory costs of SGFormer both scale linearly w.r.t. graph sizes. When the node number goes up to 40K, the model (SGFormer w/ Softmax) that replaces our attention with the Softmax attention suffers out-of-memory and in contrast, SGFormer costs only 1.5GB memory. 

\subsection{Results with Limited Labeled Data}\label{sec-exp-ratio}

As extension of the benchmark results, we next test the model with different amount of labeled data to investigate into the generalization capability of the model with limited training data. We consider three datasets \textsc{cora}, \textsc{pubmed} and \textsc{chameleon} and randomly split the nodes in each dataset into training/validation/testing with different ratios. In specific, we fix the validation and testing ratios both as 25\% and change the training ratio from 50\% to 1\%. Table~\ref{tab:ratio-res-cora}, \ref{tab:ratio-res-pubmed} and \ref{tab:ratio-res-chameleon} present the experimental results on three datasets, respectively, where we compare SGFormer with the GNN and Transformer competitors. We found that as the training ratio decreases, the performance of all the models exhibits degradation to a certain degree. In contrast, SGFormer maintains the superiority and achieves the best testing scores across most of the cases. This validates that our model possesses advanced ability to learn with limited training data, attributing to the simple and light-weighted architecture.

\begin{figure*}[t]
    \centering
    \includegraphics[width=\textwidth]{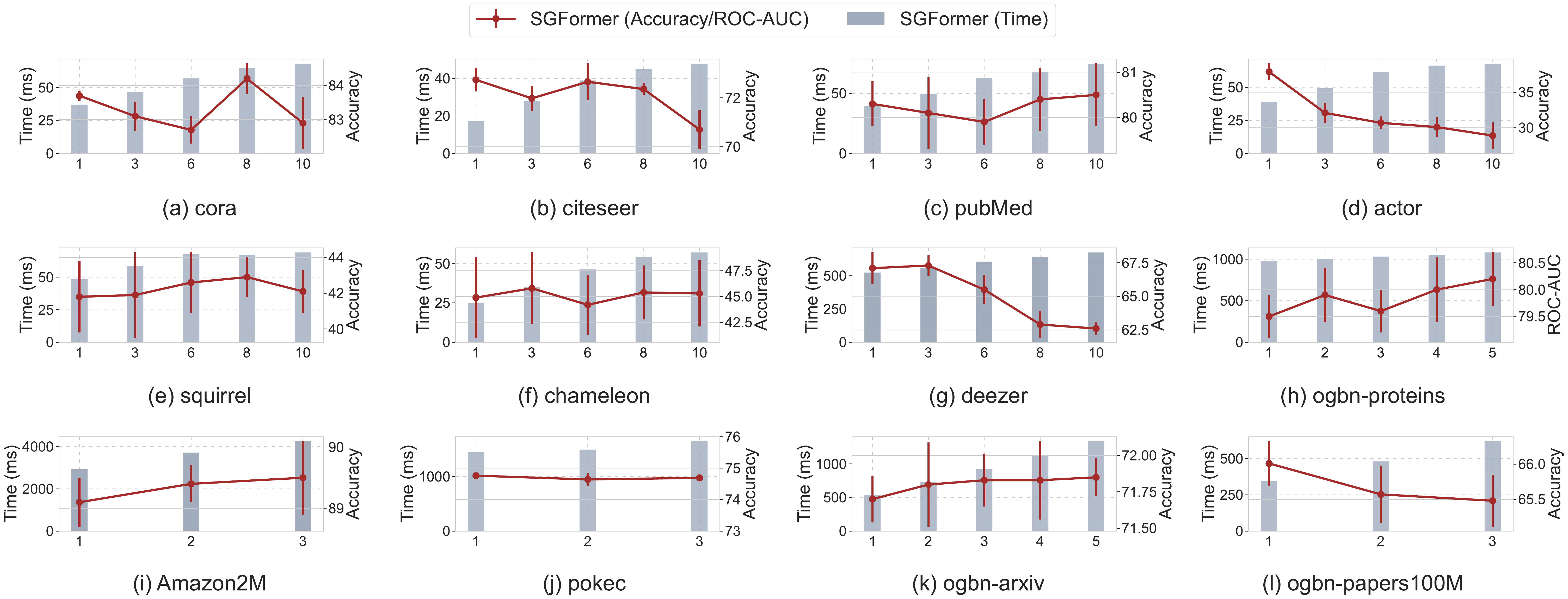}
    % \vspace{-5pt}
    \caption{Comparison of single-layer v.s. multi-layer models on 12 experimental datasets. In each dataset, we plot the training time cost per epoch and testing scores (Accuracy/ROC-AUC) of SGFormer w.r.t. the number of attention layers.}
    \label{fig:layer_ablation-sgformer}
    % \vspace{-5pt}
\end{figure*}

\begin{table}[tb!]
\centering
\caption{Ablation study on attention functions.}
\vspace{-5pt}
\label{tbl:ablation-attn}
\resizebox{\linewidth}{!}{
\begin{tabular}{l|c|c|c}
\toprule
\textbf{Dataset} & \multicolumn{1}{c|}{\textsc{cora}} & \multicolumn{1}{c|}{\textsc{actor}} & \multicolumn{1}{c}{\textsc{ogbn-proteins}} \\
 \midrule
 SGFormer (default) & 89.22 ± 0.18 & 37.36 ± 1.97 & 79.18 ± 0.44 \\
 w/ GAT Attention & 83.21 ± 0.96 & 35.81 ± 1.31 & 73.20 ± 0.68 \\
    w/ Softmax Attention & 76.47 ± 0.70 & 27.59 ± 1.21 & 68.34 ± 0.53 \\
    w/ NodeFormer Attention & 81.33 ± 2.14 & 35.93 ± 0.96 & 72.91 ± 0.78\\
\toprule
\end{tabular}
}
\vspace{-5pt}
\end{table}

\begin{table}[tb!]
\centering
\caption{Performance comparison with different $\alpha$.}
\vspace{-5pt}
\label{tbl:ablation-alpha}
\resizebox{\linewidth}{!}{
\begin{tabular}{l|c|c|c}
\toprule
\textbf{Dataset} & \multicolumn{1}{c|}{\textsc{cora}} & \multicolumn{1}{c|}{\textsc{actor}} & \multicolumn{1}{c}{\textsc{ogbn-proteins}} \\
 \midrule
 SGFormer ($\alpha=0.8$) & 89.22 ± 0.18 & 35.64 ± 3.07 & 79.04 ± 0.18 \\
 SGFormer ($\alpha=0.5$) & 88.74 ± 0.27 & 37.36 ± 1.97 & 79.12 ± 0.40 \\
    SGFormer ($\alpha=0.2$) & 75.42 ± 0.57 & 37.16 ± 1.58 & 79.18 ± 0.44 \\
    SGFormer ($\alpha=0$) & 75.57 ± 0.61 & 36.75 ± 1.74 & 73.06 ± 0.34 \\
\bottomrule
\end{tabular}
}
\vspace{-5pt}
\end{table}

\begin{figure*}[t]
    \centering
    \includegraphics[width=\textwidth]{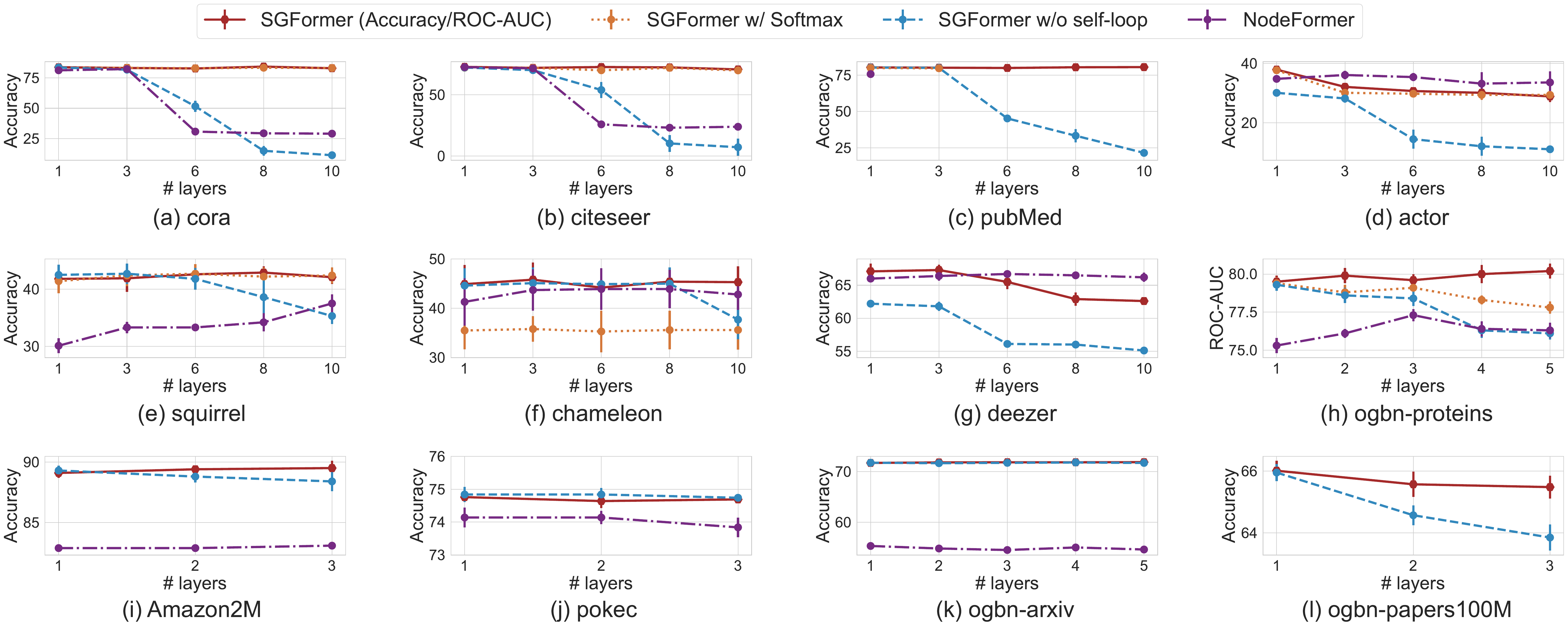}
    % \vspace{-5pt}
    \caption{Performance comparison of single-layer v.s. multi-layer models on 12 experimental datasets. In each dataset, we plot the testing scores (Accuracy/ROC-AUC) of SGFormer, SGFormer w/o self-loop, SGFormer w/ Softmax and NodeFormer w.r.t. the number of attention layers.}
    \label{fig:layer_ablation-other}
    % \vspace{-5pt}
\end{figure*}

\subsection{Ablation Study and Hyper-parameter Analysis}\label{sec-exp-ablation}

Apart from the comparative experiments, we provide more discussions to justify the effectiveness of our model, including ablation study w.r.t. different attention functions and analysis on the impact of the hyper-parameter $\alpha$.

\textbf{Impact of Attention Function.} SGFormer adopts a new attention function for computing the all-pair interactions to compute the updated node embeddings. To verify its effectiveness, we replace it by the GAT~\cite{GAT}'s attention, Softmax attention used by \cite{transformer} and NodeFormer~\cite{nodeformer}'s attention. Table~\ref{tbl:ablation-attn} presents the results using different attention functions on \textsc{cora}, \textsc{actor} and \textsc{proteins}. We found that the default attention function yields the best results across three datasets, which suggests that our design is superior for learning node representations. In particular, the vanilla Softmax attention leads to unsatisfactory performance compared to other choices. This is possibly due to the over-normalizing issue as identified by \cite{nodeformer}, i.e., the normalized attention scores after the Softmax operator tend to attend on very few nodes and cause gradient vanishing for other nodes during optimization. In contrast, the linear attention function used by SGFormer directly computes the all-pair similarity via dot-product and can overcome the over-normalizing issue. 

\textbf{Impact of $\alpha$.} The hyper-parameter $\alpha$ controls the weights on the GN module and the all-pair attention for computing the aggregated embedding. To be specific, larger $\alpha$ means that more importance is attached to the GN module and the input graph for computing the representations. In Table~\ref{tbl:ablation-alpha} we report the results with different $\alpha$'s on three datasets, which shows that the optimal setting of $\alpha$ varies case by case. In particular, $\alpha=0.8$ yields the best result on \textsc{cora}, presumably because this dataset has high homophily ratio and the input graph is useful for the predictive task. Differently, for \textsc{actor}, which is a heterophilic graph, we found using $\alpha=0.5$ achieves the best performance. This is due to that in such a case, the observed structures can be noisy and the all-pair attention becomes important for learning useful unobserved interactions. The similar case applies to \textsc{proteins}, which is also a heterophilic graph, and we found setting a relatively small $\alpha$ leads to the highest testing score. In fact, the informativeness of input graphs can vary case by case in practice, depending to the property of the dataset and the downstream task. SGFormer allows enough flexibility to control the importance attached to the input structures by adjusting the weight $\alpha$ in our model.

\subsection{Further Discussions}\label{sec-exp-dis}

We proceed to analyze the impact of the number of model layers on the performance and further compare multi-layer and single-layer models on different datasets. Fig.~\ref{fig:layer_ablation-sgformer} and \ref{fig:layer_ablation-other} present the experimental results across 12 datasets.

\textbf{Single-layer v.s. multi-layer attentions of SGFormer.} In Fig.~\ref{fig:layer_ablation-sgformer}, we plot the training time per epoch and the testing performance of SGFormer when the layer number of global attention increases from one to more. We found that using more layers does not contribute to considerable performance boost and instead leads to performance drop in some cases (e.g., the heterophilic graphs \textsc{actor} and \textsc{deezer}). Even worse, multi-layer attention requires more training time costs. Notably, using one-layer attention of SGFormer can consistently yield highly competitive performance as the multi-layer attention. These results verify the theoretical results in Sec.~\ref{sec-theory} and the effectiveness of our single-layer attention that has desired expressivity and superior efficiency.

\textbf{Single-layer v.s. multi-layer attentions of other Transformers.} In Fig.~\ref{fig:layer_ablation-other}, we present the testing performance of SGFormer, NodeFormer, SGFormer w/o self-loop (removing the self-loop propagation in Eqn.~\ref{eqn-new-attention}) and SGFormer w/ Softmax (replacing our attention by the Softmax attention), w.r.t. different numbers of attention layers in respective models. We found that using one-layer attention for these models can yield decent results in quite a few cases, which suggests that for other implementations of global attention, using a single-layer model also has potential for competitive performance. %In some cases, for instance, ANS-GT suffers performance drop on \textsc{cora} using one layer. The reason is that ANS-GT does not utilize global all-pair attention, and decreasing the layer number would degrade the model's expressivity. 
In some cases, for instance, NodeFormer produces unsatisfactory results on \textsc{ogbn-proteins} with one layer. This is possibly because NodeFormer couples the global attention and local propagation in each layer, and using the single-layer model could sacrifice the efficacy of the latter. On top of all of these, we can see that it can be a promising future direction for exploring  effective shallow attention models that can work consistently and stably well. We also found when using deeper model depth, SGFormer w/o self-loop exhibits clear performance degradation and much worse than the results of SGFormer, which suggests that the self-loop propagation in Eqn.~\ref{eqn-new-attention} can help to maintain the competitiveness of SGFormer with multi-layer attentions.

% \emph{.} In Fig.~\ref{fig:layer} we increase the layer number $K$ from 1 to 20, and the performance of \ours keeps stable, which shows that \ours is immune to the potential over-smoothing which is commonly encountered by deep GNNs. Moreover, we found that using extremely shallow layers (e.g., 1 or 2) is sufficient for satisfactory performance, which can be an advantage of \ours.

% \textbf{The Impact of $\alpha$ and $\gamma$.} In Fig.~\ref{fig:residual} of Appendix~\ref{appx-result}, we plot the accuracy w.r.t. the combination weights $\alpha$ and $\gamma$. The results show that the residual weight $\alpha$ has negligible effect on the accuracy. In particular, small $\gamma$ (less concentration on input graphs) leads to some performance degradation, which implies that input structures are informative and our graph-enhanced propagation can indeed leverage such information.

% \textbf{Ablation Studies on Attention Functions.} We replace the attention definition in \eqref{eqn-attn-ours} with the dot-then-exponential attention in \cite{transformer} and the concatenate-then-transform attention in \cite{GAT}. Both of them have $O(N^2)$ complexity. The results in Fig.~\ref{fig:attn} located in Appendix~\ref{appx-result} show that our attention mechanism can produce competitive accuracy and notably, significantly reduce the time cost, thanks to its linear complexity.

\section{Conclusions}

This paper aims at unlocking the potential of simple Transformer-style architectures for learning large-graph representations where the scalability challenge plays a bottleneck. By analysis on the learning behaviors of Transformers on graphs, we reveal a potential way to build powerful Transformers via simplifying the architecture to a single-layer model. Based on this, we present Simplified Graph Transformers (SGFormer) that possesses desired expressiveness for capturing all-pair interactions with the minimal cost of one-layer attention. The simple and lightweight architecture enables to scale smoothly to a large graph with 0.1B nodes and yields significant acceleration over peer Transformers on medium-sized graphs. On top of our technical contributions, we believe the results could shed lights on a new promising direction for building powerful and scalable Transformers on large graphs, which is largely under-explored. For future work, we plan extend SGFormer to the setting of solving combinatorial problems~\cite{Yan2020LearningFG,WangPAMI23,JiangPAMI21} whereby the GNN has been a popular backbone.

\ifCLASSOPTIONcaptionsoff
  \newpage
\fi

\bibliographystyle{IEEEtran}
\bibliography{IEEEabrv,Bibliography}

\vfill

% Can be used to pull up biographies so that the bottom of the last one
% is flush with the other column.
%\enlargethispage{-5in}

% that's all folks
\end{document}